\documentclass[11pt,a4paper]{article}

\usepackage[margin=25mm]{geometry}
\usepackage[T1]{fontenc}

\usepackage{amsmath,amssymb,amsthm}
\allowdisplaybreaks

\newtheorem{corollary}{Corollary}[section]
\newtheorem{proposition}{Proposition}[section]

\theoremstyle{remark}
\newtheorem{remark}{Remark}[section]
\theoremstyle{plain}

\usepackage{graphicx}
\usepackage{array,tabularx,tabularray}
\usepackage{booktabs,longtable,rotating,dcolumn}
\usepackage{xcolor,colortbl}
\usepackage{caption}
\usepackage[nospace]{cite}
\usepackage{url}
\usepackage[hidelinks,bookmarksnumbered]{hyperref}

\newcommand{\ack}[1]{\section*{Acknowledgements}#1}

\title{Dissipative Lagrangian Neural Networks for the
Interpretable Modelling of Passive Mechanical Systems}

\author{Rob Seabourne and Alice Cicirello\\
\small Department of Engineering, University of Cambridge\\
\small Trumpington Street, Cambridge CB2 1PZ, UK\\[4pt]
\small\texttt{\href{mailto:rs2229@cantab.ac.uk}{rs2229@cantab.ac.uk}}}

\date{}
\begin{document}

\maketitle

\begin{abstract}
\noindent In this paper, we propose the Dissipative Lagrangian Neural Network (dLNN), a continuous-time system identification technique that separately learns the conservative and dissipative dynamics of a mechanical system directly from data. A key contribution is a novel parametrisation of dissipative forces that (i) is structurally constrained to be dissipative, enforcing passivity of the learned model, (ii) cannot absorb conservative mechanics, and (iii) retains sufficient expressivity to represent a broad family of state-dependent damping and regularised friction forces. The model is validated on prediction tasks on two real-world mechanical systems: trajectory prediction of a chaotic double pendulum and inverse-dynamics modelling of a six-degree-of-freedom industrial robot. Across these tasks, the dLNN demonstrates improved predictive performance over existing Lagrangian architectures while remaining competitive with a black-box neural model. The imposed physical structure also improves generalisation to unseen states, with the benefits becoming particularly pronounced when training data is limited. Finally, we introduce a symbolic distillation procedure that extracts closed-form expressions from the learned physical components. Applied to the industrial robot, this produces interpretable symbolic models that retain high predictive accuracy while reducing the computational overhead associated with evaluating neural Lagrangian models.
\end{abstract}

\section{Introduction}
Building accurate numerical models of mechanical systems is essential for many different areas of engineering. From model-based control strategies to building digital twins for robotic learning or structural health monitoring, the availability of accurate physical models is essential to the efficient engineering of complex built-up mechanical systems. 

System identification provides a means of constructing dynamical models from measured data. Traditionally, this involves prescribing a model structure and estimating its parameters from observations, as in linear ARX models~\cite{ljung_system_1999} and their nonlinear extension, NARX models~\cite{leontaritis_input-output_1985}. More modern approaches seek to provide additional insight into the dynamics by quantifying predictive uncertainty or discovering interpretable governing equations. For example, Gaussian process models provide nonlinear predictions together with uncertainty estimates conditional on the model assumptions~\cite{kocijan_dynamic_2005}, while sparse identification of nonlinear dynamics (SINDy) uses sparse regression to select terms from a candidate-function library, yielding compact, interpretable equations of motion~\cite{brunton_discovering_2016}.

However, in recent years, the increased availability of data and computing power has also supported the development of flexible neural-network models for nonlinear system identification. Discrete-time approaches include recurrent neural networks (RNNs), their long short-term memory (LSTM) variants, and Transformer-based architectures, which learn temporal relationships from sequences of observations to predict future outputs~\cite{elman_finding_1990,hochreiter_long_1997,wen_transformers_2023}. For physical systems (which are typically modelled by continuous-time differential equations), Neural ODEs offer a natural alternative by learning the underlying continuous-time vector field of a system~\cite{chen_neural_2018}. Although these approaches can capture complex nonlinear behaviour, predictive accuracy alone does not ensure that the learned dynamics respect the physical properties of the underlying mechanical system. Typically, in engineering applications, there is already a wealth of knowledge about the systems of interest that should be incorporated into the system identification procedure.

Physics-enhanced machine learning (PEML) bridges first-principles and black-box approaches by combining physical knowledge with data-driven modelling~\cite{cicirello_physics-enhanced_2024,haywood-alexander_discussing_2024}. This can improve data efficiency, generalisation and the physical realism of predictions. PEML encompasses a range of approaches, including learning a neural-network discrepancy to a first-principles model~\cite{rackauckas_universal_2021,lai_structural_2021}, incorporating physical knowledge into Gaussian process regression~\cite{cross_spectrum_2024}, imposing physical constraints during sparse equation discovery~\cite{loiseau_constrained_2018,lathourakis_physics_2024}, enforcing governing equations through the training loss~\cite{raissi_physics-informed_2019}, and embedding physical structure directly in the model architecture~\cite{greydanus_hamiltonian_2019,lutter_deep_2019,cranmer_lagrangian_2020}. Within this broader family, physics-informed machine learning (PIML) refers specifically to approaches that constrain the structure of data-driven models through biases derived from physics and domain knowledge~\cite{cicirello_physics-enhanced_2024}.

A family of PIML models that use the Lagrangian or Hamiltonian formalism has been developed to learn physically realistic dynamics, known respectively as Lagrangian Neural Networks~\cite{cranmer_lagrangian_2020,lutter_deep_2019} and Hamiltonian Neural Networks~\cite{greydanus_hamiltonian_2019}. These models naturally embed physical conservative structure into their learned dynamics, which has been shown to improve modelling performance on conservative dynamical systems such as multi-body gravitational systems~\cite{greydanus_hamiltonian_2019}, robotic arms~\cite{lutter_deep_2019}, vibrations~\cite{bacsa_symplectic_2023}, and relativistic particle dynamics~\cite{cranmer_lagrangian_2020}. The key distinction between the Lagrangian and Hamiltonian formalisms is their choice of state variables: the Lagrangian formulation is expressed in terms of generalised coordinates and velocities $(\mathbf{q},\dot{\mathbf{q}})$, whereas the Hamiltonian formulation is expressed in terms of generalised coordinates and their conjugate canonical momenta $(\mathbf{q},\mathbf{p})$. Typically, in mechanical systems of interest, the expression of the canonical momenta $\mathbf{p}$ is non-trivial, especially as the complexity of the system increases. On the other hand, generalised velocities $\dot{\mathbf{q}}$ are usually readily estimated. LNNs are therefore particularly well suited to mechanical modelling, but their conservative formulation neglects the energy losses inevitably present in real-world systems. To account for these losses, this work proposes an extension to the LNN that models dissipation alongside conservative dynamics.

Various alternative black-box approaches have been proposed to attempt to extend LNNs and HNNs to non-conservative systems. These typically augment the conservative mechanics by modelling either (i) a black-box generalised force $\mathbf{Q}_\theta$ \cite{xiao_generalized_2025,wu_dynamic_2024}, or (ii) a black-box Rayleigh dissipation function $\mathcal{R}_\theta$~\cite{sosanya_dissipative_2022}. Although highly expressive, these techniques allow the non-conservative branch to absorb dynamics that should be captured by the conservative model, weakening the physical interpretability of the model components. Furthermore, the learned non-conservative forces are not constrained to be dissipative, so the passivity of the original LNN formulation may be lost. Solutions have been proposed~\cite{wu_dynamic_2024} that penalise the output of the non-conservative network in the loss function to lessen this issue. However, this does not structurally guarantee that the learned non-conservative dynamics remain separate from the conservative dynamics, allowing the non-conservative branch to absorb dynamics that should instead be represented by the Lagrangian.

Alternative approaches incorporate dissipation with a  more structured approach. Examples include a quadratic Rayleigh dissipation function $\mathcal{R}_\theta=\frac{1}{2}\dot{\mathbf{q}}^\mathsf{T}\mathbf{R}_\theta(\mathbf{q})\dot{\mathbf{q}}$~\cite{hansen_learning_2025} or a closed-form friction model with learnable parameters~\cite{lutter_deep_2019-1}. These methods guarantee that the model cannot absorb conservative mechanics and passivity can be preserved. However, these methods have limited expressivity. Dissipation in complex engineering systems rarely follows a simple parametric form. Only in certain regions of parameter space and under certain loading conditions might simple parametric dissipation models be indistinguishable from the ground-truth complex dissipation function~\cite{lathourakis_physics_2024}.

In this paper, we introduce the Dissipative Lagrangian Neural Network or dLNN. Within the proposed approach, a parametrisation
$\mathbf{Q}^{\mathrm{diss}}(\mathbf{q},\dot{\mathbf{q}})
=
-\mathbf{D}(\mathbf{q},\dot{\mathbf{q}})\dot{\mathbf{q}}$,
with $\mathbf{D}(\mathbf{q},\dot{\mathbf{q}})\succ 0$, is learned alongside the conservative dynamics to model dissipative forces. This parametrisation is chosen as it structurally prohibits the absorption of conservative dynamics, enforces passivity, and is representative of all dissipative forces $\mathbf{Q}^{\mathrm{diss}}(\mathbf{q},\dot{\mathbf{q}})$ satisfying $\dot{\mathbf{q}}^{\mathsf T}\mathbf{Q}^{\mathrm{diss}}<0$ for $\dot{\mathbf{q}}\neq\mathbf{0}$ and $\mathbf{Q}^{\mathrm{diss}}(\mathbf{q},\mathbf{0})=\mathbf{0}$. These conditions encompass a broad class of dissipative effects encountered in mechanical systems, including viscous and nonlinear damping and regularised friction models~\cite{olsson_friction_1998,al-hababi_critical_2020}. This dissipative extension of the Lagrangian Neural Network naturally incorporates the friction and damping that are inevitable in the real world, while preserving the interpretability, structure and passivity of the original architecture.

The proposed dLNN is validated against other state-of-the-art models for the unforced trajectory prediction of a real-world chaotic double pendulum and for the inverse-dynamics problem of a real industrial robot. It is shown that the dLNN has strong predictive performance in both cases.

Inspired by the works~\cite{cranmer_discovering_2020,lai_structural_2021}, this paper also introduces a symbolic distillation method based around the SINDy method~\cite{brunton_discovering_2016} that uses the separability property of the dLNN to extract a symbolic mechanical model for the purposes of computational efficiency and interpretability. This technique permits the creation of full symbolic models and hybrid symbolic-neural network models that retain the good prediction accuracy of the dLNN model whilst reducing computational evaluation time and increasing interpretability.

\section{Dissipative Lagrangian Neural Networks}

\subsection{Lagrangian mechanics background}

A time-independent Lagrangian $\mathcal{L}(\mathbf q,\dot{\mathbf q})$ is a scalar function describing the dynamics of a system in terms of the generalised coordinates $\mathbf q$ and generalised velocities $\dot{\mathbf q}$~\cite{goldstein_classical_2008}. In the presence of non-conservative generalised forces $\mathbf Q$, the equations of motion can be expressed by the forced Euler-Lagrange equations
\begin{equation}
\label{eq:EL-EQs}
\frac{\mathrm d}{\mathrm dt}
\left(
\frac{\partial\mathcal{L}}
{\partial\dot{\mathbf q}}
\right)
-
\frac{\partial\mathcal{L}}
{\partial\mathbf q}
=
\mathbf Q.
\end{equation}
Expanding the total derivative using the chain rule gives
\begin{equation}
\frac{\partial^2\mathcal{L}}
{\partial\dot{\mathbf q}\partial\dot{\mathbf q}}
\ddot{\mathbf q}
+
\frac{\partial^2\mathcal{L}}
{\partial\dot{\mathbf q}\partial\mathbf q}
\dot{\mathbf q}
-
\frac{\partial\mathcal{L}}
{\partial\mathbf q}
=
\mathbf Q.
\end{equation}
For finite-dimensional classical mechanical systems whose kinetic energy $T$ is quadratic in the generalised velocities, the Lagrangian can be written in the form
\begin{equation}
\mathcal{L}(\mathbf q,\dot{\mathbf q})
=
T-V
=
\frac12
\dot{\mathbf q}^{\mathsf T}
\mathbf M(\mathbf q)
\dot{\mathbf q}
-
V(\mathbf q),
\end{equation}
where $\mathbf M(\mathbf q)$ is the symmetric positive-definite mass matrix and $V(\mathbf q)$ is the potential energy~\cite{lynch_modern_2017}. Substituting this form into the Euler-Lagrange equations gives
\begin{equation}
\mathbf M(\mathbf q)\ddot{\mathbf q}
+
\mathbf c(\mathbf q,\dot{\mathbf q})
+
\nabla_\mathbf{q}V
=
\mathbf Q,
\end{equation}
where the Coriolis and centrifugal vector is defined by
\begin{equation}
\label{eq:corio}
\mathbf c(\mathbf q,\dot{\mathbf q})
=
\dot{\mathbf M}(\mathbf q)\dot{\mathbf q}
-
\frac12
\nabla_\mathbf{q}
\left(
\dot{\mathbf q}^{\mathsf T}
\mathbf M(\mathbf q)
\dot{\mathbf q}
\right).
\end{equation}

\begin{figure}[h]
    \centering
    \includegraphics[width=0.8\linewidth]{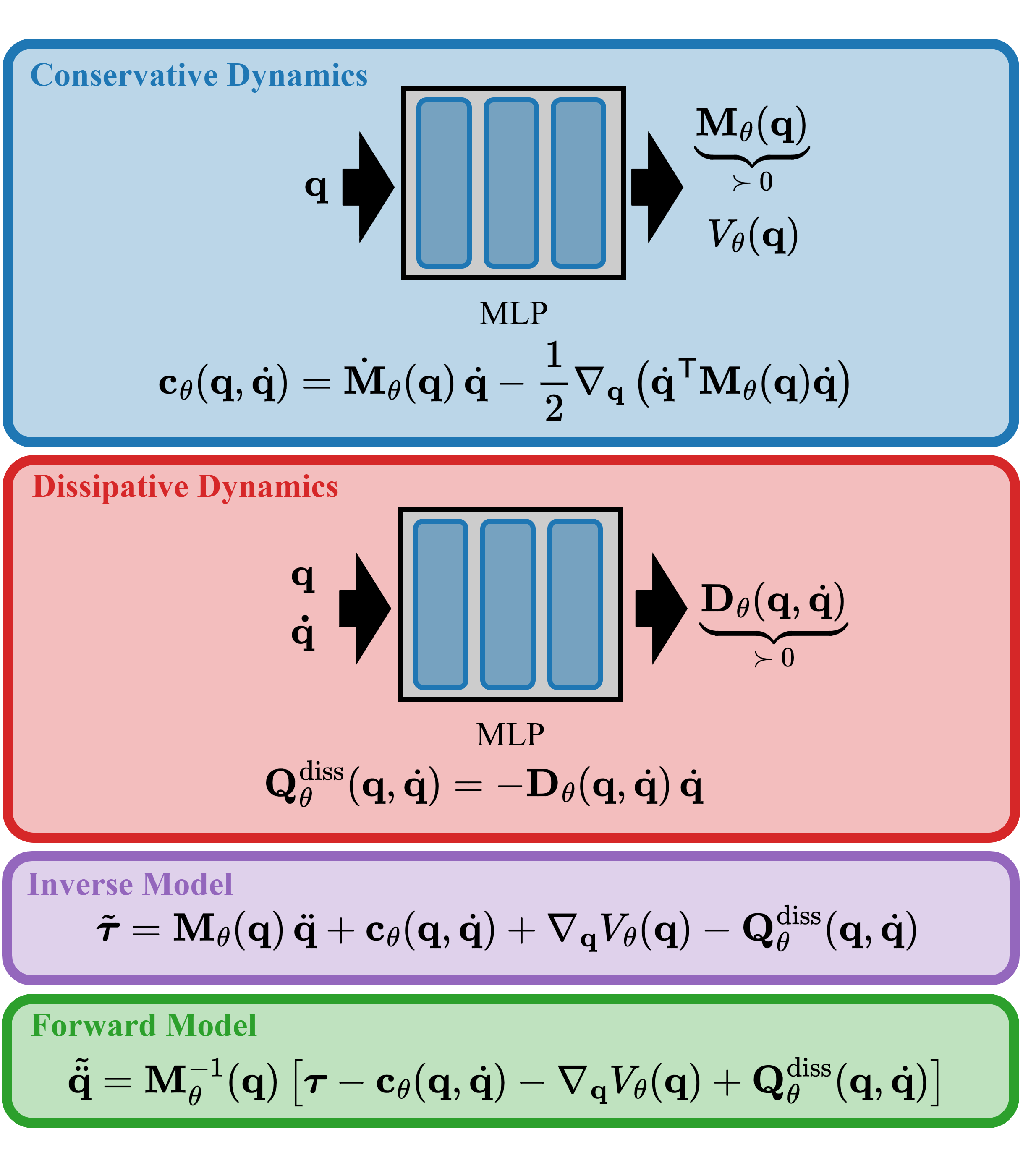}
    \caption{Architecture of the Dissipative Lagrangian Neural Network (dLNN). The conservative network learns the positive-definite mass matrix $\mathbf M_\theta(\mathbf q)$ and potential energy $V_\theta(\mathbf q)$, while the dissipative network learns the positive-definite dissipation matrix $\mathbf D_\theta(\mathbf q,\dot{\mathbf q})$. These components define a common physical model that can be evaluated for either forward or inverse dynamics.}
    \label{fig:placeholder}
\end{figure}

\subsection{Forward and inverse dynamics}
Typically in mechanics applications, the Lagrangian $\mathcal{L}$ and generalised forces $\mathbf{Q}$ are symbolic functions that are derived from first principles and the Euler-Lagrange equations, Eq.~\eqref{eq:EL-EQs}, are evaluated analytically to generate equations of motion. Lagrangian Neural Networks~\cite{cranmer_lagrangian_2020,lutter_deep_2019} introduced the idea that the Lagrangian can be replaced by a black-box neural network $\mathcal{L}_\theta(\mathbf{q},\dot{\mathbf{q}})$, and the relevant terms in the Euler-Lagrange equations can be evaluated numerically using automatic differentiation to generate the acceleration field of a physical system. To extend this idea to dissipative systems, this paper proposes learning a strictly dissipative force $\mathbf{Q}^\mathrm{diss}_\theta(\mathbf{q},\dot{\mathbf{q}})$ alongside the conservative $\mathcal{L}_\theta(\mathbf{q},\dot{\mathbf{q}})=\frac12
\dot{\mathbf q}^{\mathsf T}
\mathbf M_\theta(\mathbf q)
\dot{\mathbf q}
-
V_\theta(\mathbf q)$.

The dynamical model can be evaluated and trained on either forward or inverse dynamics, depending on the specific prediction task, and the availability of data. For mechanical systems, forward dynamics involves the prediction of future behaviour, given the current coordinates and velocities $\mathbf{q}_t,\dot{\mathbf{q}}_t$ and input forces $\boldsymbol{\tau}$ if any exist. Forward dynamical models are widely used in engineering for simulation and model-based control, where they predict the evolution of the system under applied inputs~\cite{lynch_modern_2017}. A forward model will directly model the relationship $(\mathbf{q}_t,\dot{\mathbf{q}}_t,(\boldsymbol{\tau}_t)\rightarrow\ddot{\mathbf{q}}_t)$, which can be used alongside a numerical integrator to obtain trajectory predictions $(\mathbf{q}_t,\dot{\mathbf{q}}_t,(\boldsymbol{\tau}_{t:T})\rightarrow \mathbf{q}_{t+1:T},\dot{\mathbf{q}}_{t+1:T})$. For mechanical systems, the forward model is defined from $V_\theta(\mathbf{q})$, $\mathbf{M}_\theta(\mathbf{q})$ and $\mathbf{Q}^\mathrm{diss}_\theta(\mathbf{q},\dot{\mathbf{q}})$ as

\begin{equation}
\label{eq:forward_dyn}
    \widetilde{\ddot{\mathbf{q}}}
    =
    \mathbf{M}^{-1}_\theta(\mathbf{q})
    \left[
        \boldsymbol{\tau}
        +
        \mathbf{Q}^{\mathrm{diss}}_\theta(\mathbf{q},\dot{\mathbf{q}})
        -
        \mathbf{c}_\theta(\mathbf{q},\dot{\mathbf{q}})
        -
        \nabla_{\mathbf{q}}V_\theta(\mathbf{q})
    \right].
\end{equation}
Alternatively, inverse-dynamics models attempt to reconstruct the applied forces necessary for a given trajectory, that is the relationship $(\mathbf{q}_t,\dot{\mathbf{q}}_t,\ddot{\mathbf{q}}_t\rightarrow \boldsymbol{\tau}_t)$. Inverse-dynamics models have a more specialised range of applications than forward-dynamics models, primarily being used to determine the forces or torques required to realise a prescribed motion, especially in robotics~\cite{lynch_modern_2017}. From the same $V_\theta(\mathbf{q})$, $\mathbf{M}_\theta(\mathbf{q})$ and $\mathbf{Q}^\mathrm{diss}_\theta(\mathbf{q},\dot{\mathbf{q}})$, the inverse model is defined as
\begin{equation}
    \widetilde{\boldsymbol{\tau}}
    =
    \mathbf{M}_\theta(\mathbf{q})\ddot{\mathbf{q}}
    +
    \mathbf{c}_\theta(\mathbf{q},\dot{\mathbf{q}})
    +
    \nabla_{\mathbf{q}}V_\theta(\mathbf{q})
    -
    \mathbf{Q}^{\mathrm{diss}}_\theta(\mathbf{q},\dot{\mathbf{q}}).
\end{equation}

\subsection{Learning conservative mechanics}
\label{sec:Learning Conservative Mechanics}

The conservative component of the dynamics is parametrised by learning the mass matrix
$\mathbf{M}_\theta(\mathbf q)$ and potential energy $V_\theta(\mathbf q)$ as the outputs of a neural network, giving the structured Lagrangian
\begin{equation}
    \mathcal{L}_\theta(\mathbf q,\dot{\mathbf q})
    =
    \frac{1}{2}\dot{\mathbf q}^{\mathsf T}
    \mathbf{M}_\theta(\mathbf q)
    \dot{\mathbf q}
    -
    V_\theta(\mathbf q).
\end{equation}
For a classical mechanical system, the mass matrix must be symmetric positive definite~\cite{lynch_modern_2017}. This is enforced by parametrising it through a lower-triangular Cholesky factor,
\begin{equation}
\label{eq:MASS_MATRIX}
    \mathbf{M}_\theta(\mathbf q)
    =
    \mathbf{L}^{(M)}_\theta(\mathbf q)
    \mathbf{L}^{(M)\mathsf T}_\theta(\mathbf q)
    +
    \epsilon \mathbf I,
    \qquad \epsilon > 0,
\end{equation}
where
\begin{equation}
\mathbf{L}^{(M)}_\theta(\mathbf q)
=
\begin{bmatrix}
\operatorname{softplus}\!\left(\tilde{\ell}_{11,\theta}(\mathbf q)\right)
    & 0 & \cdots & 0 \\
\tilde{\ell}_{21,\theta}(\mathbf q)
    & \operatorname{softplus}\!\left(\tilde{\ell}_{22,\theta}(\mathbf q)\right)
    & \ddots & \vdots \\
\vdots
    & \ddots
    & \ddots & 0 \\
\tilde{\ell}_{n1,\theta}(\mathbf q)
    & \cdots
    & \tilde{\ell}_{n,n-1,\theta}(\mathbf q)
    & \operatorname{softplus}\!\left(\tilde{\ell}_{nn,\theta}(\mathbf q)\right)
\end{bmatrix}.
\end{equation}
The unconstrained quantities $\tilde{\ell}_{ij,\theta}(\mathbf q)$ are outputs of a neural network. Applying a softplus activation to the diagonal entries ensures that $\mathbf{L}^{(M)}_\theta(\mathbf q)$ has strictly positive diagonal elements and hence that
$\mathbf{L}^{(M)}_\theta\mathbf{L}^{(M)\mathsf T}_\theta\succ\mathbf 0$.
The additional term $\epsilon\mathbf I$ provides a lower bound on the eigenvalues of the learned mass matrix and improves numerical conditioning when solving the forward dynamics for $\ddot{\mathbf q}$. The potential energy is similarly parametrised as
\begin{equation}
    V_\theta(\mathbf q)
    =
    \operatorname{softplus}\!\left(\tilde V_\theta(\mathbf q)\right),
\end{equation}
where $\tilde V_\theta(\mathbf q)$ is the unconstrained output of a neural network. This ensures that the learned potential is bounded below. Since the equations of motion depend only on derivatives of the potential, they are invariant to an arbitrary additive constant in $V(\mathbf q)$. Therefore, for the lower-bounded potentials considered here, fixing the energy datum such that $V_\theta(\mathbf q)>0$ does not restrict the resulting dynamics. Together with $\mathbf{M}_\theta(\mathbf q)\succ\mathbf 0$, this guarantees that the learned mechanical energy is bounded below, a property required for the passivity result introduced in Section~\ref{sec:PASS}.

\subsection{Learning dissipative mechanics} To model non-conservative mechanics, a distinction is made between externally applied input forces $\boldsymbol{\tau}$, which can be assumed known in a forward-dynamics task, and dissipative forces such as friction, damping, or drag, that are considered unknown. The latter are modelled as state-dependent generalised forces $\mathbf{Q}^{\mathrm{diss}}(\mathbf{q},\dot{\mathbf{q}})$. The total non-conservative generalised force is therefore written as \begin{equation} \mathbf{Q} = \boldsymbol{\tau} + \mathbf{Q}^{\mathrm{diss}}(\mathbf{q},\dot{\mathbf{q}}). \end{equation} The dissipative component $\mathbf{Q}^{\mathrm{diss}}_{\theta}(\mathbf{q},\dot{\mathbf{q}})$ is learned by a second, non-conservative neural network. It is parametrised as \begin{equation} \mathbf{Q}^{\mathrm{diss}}_{\theta} (\mathbf{q},\dot{\mathbf{q}}) = -\mathbf{D}_{\theta} (\mathbf{q},\dot{\mathbf{q}}) \dot{\mathbf{q}}, \qquad \mathbf{D}_{\theta} (\mathbf{q},\dot{\mathbf{q}}) \succ \mathbf{0}. \end{equation} As for the mass matrix, positive definiteness of $\mathbf{D}_{\theta}(\mathbf{q},\dot{\mathbf{q}})$ is enforced using a Cholesky parametrisation, \begin{equation} \mathbf{D}_{\theta} (\mathbf{q},\dot{\mathbf{q}}) = \mathbf{L}^{(D)}_{\theta} (\mathbf{q},\dot{\mathbf{q}}) \mathbf{L}^{(D)\mathsf T}_{\theta} (\mathbf{q},\dot{\mathbf{q}}), \end{equation} with strictly positive diagonal entries in $\mathbf{L}^{(D)}_{\theta}$ enforced using softplus activations, as described previously for $\mathbf{L}^{(M)}_{\theta}$. This construction guarantees 
\begin{equation}
\label{eq:DISS FORCE PROP1}
\dot{\mathbf{q}}^{\mathsf T} \mathbf{Q}^{\mathrm{diss}}_{\theta} < 0 \qquad \text{for } \dot{\mathbf{q}}\neq\mathbf{0}, \end{equation}
and
\begin{equation}
\label{eq:DISS FORCE PROP2}
\mathbf{Q}^{\mathrm{diss}}_{\theta} (\mathbf{q},\mathbf{0}) = \mathbf{0}.
\end{equation}
These properties are sufficient to guarantee passivity in the learned dynamics and to ensure $\mathbf{Q}^{\mathrm{diss}}_\theta$ cannot absorb conservative dynamics that should be explained by $\mathbf{M}_\theta$ or $V_\theta$, as explained in further detail in Section~\ref{sec:THEORY}. 
In fact, the decomposition of $\mathbf{Q}(\mathbf{q},\dot{\mathbf{q}})=-\mathbf{D}(\mathbf{q},\dot{\mathbf{q}})\dot{\mathbf{q}}$ can be shown to be representative of all forces satisfying Eqs.~\eqref{eq:DISS FORCE PROP1} and~\eqref{eq:DISS FORCE PROP2} as proved in Section~\ref{sec:express}. 

This parametrisation can therefore represent a broad family of instantaneous dissipative forces, including linear viscous damping, nonlinear velocity-dependent damping (such as quadratic drag), configuration-dependent damping, and coupled damping between degrees of freedom. It also accommodates regularised Coulomb friction and Stribeck-type friction, provided the resulting force opposes motion and vanishes at zero velocity. Unfortunately, dissipative phenomena with memory, such as frictional hysteresis and stiction, require additional internal states to be represented~\cite{olsson_friction_1998} and cannot be represented by this parametrisation as described.

\subsection{Optional Coulomb-like friction augmentation}
\label{sec:COULOMB EXTRA}
Although the parametrisation $\mathbf{Q}(\mathbf{q},\dot{\mathbf{q}})=-\mathbf{D}(\mathbf{q},\dot{\mathbf{q}})\dot{\mathbf{q}}$ is, in principle, capable of representing discontinuous dissipative forces, this may require $\mathbf{D}$ itself to be discontinuous or singular. When $\mathbf{D}$ is represented by a finite neural network with continuous activation functions, such behaviour can only be approximated, potentially making sharp discontinuities such as Coulomb friction difficult to learn accurately. To address this, optionally a branch can be added to $\mathbf{Q}^{\mathrm{diss}}_\theta$ to approximate this discontinuity
\begin{equation}
Q_{\theta,i}^{\mathrm{C}}
=
-\underbrace{f_{\theta,i}(\mathbf{q},\dot{\mathbf{q}})}_{>0}
\tanh(k\dot{q}_i),
\end{equation}
where the components of $\mathbf{f}_\theta(\mathbf{q},\dot{\mathbf{q}})$ are additional outputs of the dissipative network that are constrained to be positive by a softplus output activation, and where $k>0$ controls the sharpness of the transition, with sufficiently large $k$ providing a smooth approximation to the Coulomb friction discontinuity. This retains the properties of separability and passivity, as explained in Sections~\ref{sec:PASS} and~\ref{sec:SEPER}. Separability of the terms is ensured since $\mathbf{Q}^\mathrm{C}_\theta(\mathbf{q},\mathbf{0})=\mathbf{0}$ and passivity is retained since
\begin{equation}
    \dot{\mathbf{q}}^{\mathrm{T}}
    \mathbf{Q}_{\theta}^{\mathrm{C}}
    =
    -\sum_i
    \underbrace{f_{\theta,i}(\mathbf{q},\dot{\mathbf{q}})}_{>0}
    \underbrace{\dot{q}_i\tanh(k\dot{q}_i)}_{>0\;\text{if}\;\dot{q}_i\neq 0}
    < 0,
    \qquad
    \dot{\mathbf{q}}\neq\mathbf{0}.
\end{equation}
Note this term does not replace the parametrisation $-\mathbf{D}_\theta(\mathbf{q},\dot {\mathbf{q}})\dot{\mathbf{q}}$ but rather complements it in modelling particularly rich discontinuous friction. This term on its own does not have the expressivity of the standard formulation. The total dissipative force is then
\begin{equation}
    \mathbf{Q}^{\mathrm{diss}}_\theta(\mathbf{q},\dot {\mathbf{q}})=-\mathbf{D}_\theta(\mathbf{q},\dot {\mathbf{q}})\dot{\mathbf{q}}+\mathbf{Q}_{\theta}^{\mathrm{C}}(\mathbf{q,\dot{\mathbf{q}}})
\end{equation}

\subsection{Training objectives}
The learned components $\mathbf{Q}^{\mathrm{diss}}_\theta$, $V_\theta$, and $\mathbf{M}_\theta$ can then be used to approximate either the forward dynamics $(\mathbf{q}_t,\dot{\mathbf{q}}_t,(\boldsymbol{\tau}_t) \rightarrow \ddot{\mathbf{q}}_t)$ or the inverse dynamics $(\mathbf{q}_t,\dot{\mathbf{q}}_t,\ddot{\mathbf{q}}_t \rightarrow \boldsymbol{\tau}_t)$. For forward dynamics, the predicted acceleration is given by
\begin{equation}
    \widetilde{\ddot{\mathbf{q}}}
    =
    \mathbf{M}^{-1}_\theta(\mathbf{q})
    \left[
        \boldsymbol{\tau}
        +
        \mathbf{Q}^{\mathrm{diss}}_\theta(\mathbf{q},\dot{\mathbf{q}})
        -
        \mathbf{c}_\theta(\mathbf{q},\dot{\mathbf{q}})
        -
        \nabla_{\mathbf{q}}V_\theta(\mathbf{q})
    \right].
\end{equation}
Similarly, for inverse dynamics, the applied generalised forces are predicted as
\begin{equation}
    \widetilde{\boldsymbol{\tau}}
    =
    \mathbf{M}_\theta(\mathbf{q})\ddot{\mathbf{q}}
    +
    \mathbf{c}_\theta(\mathbf{q},\dot{\mathbf{q}})
    +
    \nabla_{\mathbf{q}}V_\theta(\mathbf{q})
    -
    \mathbf{Q}^{\mathrm{diss}}_\theta(\mathbf{q},\dot{\mathbf{q}}).
\end{equation}
In the case of forward dynamics, the models may be trained using either instantaneous acceleration data or trajectory data. When acceleration data is available, the training loss is simply the weighted mean-squared error between the predicted and measured accelerations,
\begin{equation}
\label{eq:accel_training}
    \mathcal{J}_{\mathrm{acc}}
    =
    \frac{1}{N}
    \sum_{j=1}^{N}
    \left\|
        \widetilde{\ddot{\mathbf{q}}}^{(j)}
        -
        \ddot{\mathbf{q}}^{(j)}
    \right\|_{\mathbf{W}_{\ddot q}}^2,
\end{equation}
where $j$ indexes the $N$ training samples. Throughout this section, $\mathbf{W}_{\ddot q}$, $\mathbf{W}_x$ and $\mathbf{W}_\tau$ are diagonal weighting matrices that normalise errors by the training-data standard deviations of the corresponding components.

For training on trajectory data, the forward model is paired with a numerical ODE solver. For example, RK4 integrates the predicted acceleration field to advance the state,
\begin{equation}
    \widetilde{\mathbf{x}}_{k+1}
    =
    \operatorname{RK4}\!\left(
        \dot{\mathbf{x}}
        =
        \begin{bmatrix}
            \dot{\mathbf{q}} \\
            \widetilde{\ddot{\mathbf{q}}}_\theta
        \end{bmatrix},
        \widetilde{\mathbf{x}}_k,
        \Delta t
    \right),
    \qquad
    \mathbf{x}
    =
    \begin{bmatrix}
        \mathbf{q} \\
        \dot{\mathbf{q}}
    \end{bmatrix},
\end{equation}
where $\Delta t$ is the integration time step. Training uses a multiple-shooting approach: a prediction is initialised from the measured state at every time step and rolled out over $K$ subsequent time steps, giving overlapping prediction windows. The training loss is the weighted mean-squared error over these windows,
\begin{equation}
\label{eq:traj_training}
    \mathcal{J}_{\mathrm{traj}}
    =
    \frac{1}{NK}
    \sum_{j=1}^{N}
    \sum_{k=1}^{K}
    \left\|
        \widetilde{\mathbf{x}}^{(j)}_k
        -
        \mathbf{x}^{(j)}_k
    \right\|_{\mathbf{W}_x}^2,
\end{equation}
where $j$ indexes the $N$ prediction windows and $k$ indexes the time steps within each window. Each prediction starts from its measured initial state, $\widetilde{\mathbf{x}}^{(j)}_0=\mathbf{x}^{(j)}_0$. This loss can be backpropagated either directly through the numerical solver or using the adjoint sensitivity method commonly employed in Neural ODEs~\cite{chen_neural_2018}.

For inverse-dynamics training, the loss is similarly defined as the weighted mean-squared error between the predicted and measured applied generalised forces,
\begin{equation}
\label{eq:inv_training}
    \mathcal{J}_{\mathrm{inv}}
    =
    \frac{1}{N}
    \sum_{j=1}^{N}
    \left\|
        \widetilde{\boldsymbol{\tau}}^{(j)}
        -
        \boldsymbol{\tau}^{(j)}
    \right\|_{\mathbf{W}_\tau}^2,
\end{equation}
where $j$ indexes the $N$ training samples.

Typically, the choice of training objective is dependent on the available data and its quality. A common problem in continuous-time system identification is the estimation of the derivatives $\dot{\mathbf{q}},\ddot{\mathbf{q}}$. Often the coordinate itself $\mathbf{q}$ is directly measured and $\dot{\mathbf{q}}$ and $\ddot{\mathbf{q}}$ are numerically estimated, for example by the central difference method or spectral differentiation. This typically amplifies state noise and can lead to noisy velocities and accelerations~\cite{chartrand_numerical_2011}, degrading acceleration-based and inverse-dynamics training, Eqs.~\eqref{eq:accel_training} and~\eqref{eq:inv_training}, respectively. On the other hand, trajectory training avoids the need for estimating $\ddot{\mathbf{q}}$, but suffers from missing readings of $\boldsymbol{\tau}$ between measurements that are needed for accurate forward integration. Trajectory training also  creates a more complex computational graph which increases training complexity. The training objective should be chosen based on the quality and availability of data, and the final inference task required of the model.

\section{Theoretical properties}
Having defined the dLNN architecture, we next establish the theoretical properties introduced by its dissipative parametrisation.
\label{sec:THEORY}
\subsection{Passivity, energy dissipation and stability}
\label{sec:PASS}
Consider the learned dynamical system with Lagrangian $\mathcal{L}_\theta(\mathbf{q},\dot{\mathbf{q}})$ with externally applied forces $\boldsymbol{\tau}$ and dissipative internal forces $\mathbf{Q}_\theta^{\text{diss}}(\mathbf{q},\dot{\mathbf{q}})=-\mathbf{D}_\theta(\mathbf{q},\dot{\mathbf{q}})\dot{\mathbf{q}}$ such that $\mathbf{D}_\theta(\mathbf{q},\dot{\mathbf{q}})\succ\mathbf{0}$. The system's energy is lower-bounded and defined as 

\begin{equation}
E_\theta=\dot{\mathbf{q}}^{\mathsf T}\frac{\partial \mathcal{L}_\theta}{\partial \dot{\mathbf{q}}}-\mathcal{L}_\theta=\frac{1}{2} \dot{\mathbf{q}}^{\mathsf{T}}\mathbf{M}_\theta(\mathbf{q})\dot{\mathbf{q}}+V_\theta(\mathbf{q}).
\end{equation}

\begin{proposition}[Passivity of the learned dynamics]
The learned dynamical system is passive from input $\boldsymbol{\tau}$ to output $\dot{\mathbf{q}}$ with storage function $E_\theta$.
\end{proposition}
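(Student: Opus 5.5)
The plan is to verify the standard passivity inequality directly. Passivity from $\boldsymbol{\tau}$ to $\dot{\mathbf{q}}$ with storage $E_\theta$ requires three things. First, $E_\theta$ must be bounded below, which has already been established. Second, along every solution,
\begin{equation*}
\dot E_\theta \le \boldsymbol{\tau}^{\mathsf T}\dot{\mathbf{q}},
\end{equation*}
or equivalently, in integrated form, $E_\theta(t)-E_\theta(0)\le\int_0^t \boldsymbol{\tau}^{\mathsf T}\dot{\mathbf{q}}\,\mathrm{d}s$. Lower boundedness holds because $\mathbf{M}_\theta\succ\mathbf{0}$ by the Cholesky construction in Eq.~\eqref{eq:MASS_MATRIX} and $V_\theta=\operatorname{softplus}(\tilde V_\theta)>0$. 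Together these give $E_\theta>0$, so $E_\theta$ is a valid storage function, possibly after shifting by a constant. The main work is therefore the energy-balance computation.

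For that computation, differentiate $E_\theta=\dot{\mathbf{q}}^{\mathsf T}\partial_{\dot{\mathbf q}}\mathcal{L}_\theta-\mathcal{L}_\theta$ along trajectories of the forced Euler--Lagrange equations \eqref{eq:EL-EQs} with $\mathbf{Q}=\boldsymbol{\tau}+\mathbf{Q}^{\mathrm{diss}}_\theta$. This avoids handling the Coriolis term explicitly. The derivative is
\begin{equation*}
\dot E_\theta=\ddot{\mathbf q}^{\mathsf T}\frac{\partial\mathcal{L}_\theta}{\partial\dot{\mathbf q}}+\dot{\mathbf q}^{\mathsf T}\frac{\mathrm d}{\mathrm dt}\frac{\partial\mathcal{L}_\theta}{\partial\dot{\mathbf q}}-\frac{\partial\mathcal{L}_\theta}{\partial\mathbf q}^{\mathsf T}\dot{\mathbf q}-\frac{\partial\mathcal{L}_\theta}{\partial\dot{\mathbf q}}^{\mathsf T}\ddot{\mathbf q}
=\dot{\mathbf q}^{\mathsf T}\Big(\frac{\mathrm d}{\mathrm dt}\frac{\partial\mathcal{L}_\theta}{\partial\dot{\mathbf q}}-\frac{\partial\mathcal{L}_\theta}{\partial\mathbf q}\Big),
\end{equation*}
where I have used that $\mathcal{L}_\theta$ has no explicit time dependence. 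Substituting the equations of motion gives
\begin{equation*}
\dot E_\theta=\dot{\mathbf q}^{\mathsf T}\boldsymbol{\tau}+\dot{\mathbf q}^{\mathsf T}\mathbf{Q}^{\mathrm{diss}}_\theta.
\end{equation*}
As a cross-check, the same result follows from the matrix form $\mathbf M\ddot{\mathbf q}+\mathbf c+\nabla V=\mathbf Q$ together with Eq.~\eqref{eq:corio}. In that route, one verifies that $\dot{\mathbf q}^{\mathsf T}\mathbf c=\tfrac12\dot{\mathbf q}^{\mathsf T}\dot{\mathbf M}\dot{\mathbf q}$, using $\dot{\mathbf q}^{\mathsf T}\nabla_{\mathbf q}(\dot{\mathbf q}^{\mathsf T}\mathbf M\dot{\mathbf q})=\dot{\mathbf q}^{\mathsf T}\dot{\mathbf M}\dot{\mathbf q}$. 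This cancellation is the one step that needs care, and I would include it in that form.

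Next, apply Eq.~\eqref{eq:DISS FORCE PROP1}. Since $\dot{\mathbf q}^{\mathsf T}\mathbf{Q}^{\mathrm{diss}}_\theta=-\dot{\mathbf q}^{\mathsf T}\mathbf D_\theta\dot{\mathbf q}\le 0$, with equality only at $\dot{\mathbf q}=\mathbf 0$, we obtain
\begin{equation*}
\dot E_\theta=\boldsymbol{\tau}^{\mathsf T}\dot{\mathbf q}-\dot{\mathbf q}^{\mathsf T}\mathbf D_\theta\dot{\mathbf q}\le\boldsymbol{\tau}^{\mathsf T}\dot{\mathbf q}.
\end{equation*}
Integrating this in time gives the passivity inequality. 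The proof would close with two remarks. First, if the Coulomb augmentation of Section~\ref{sec:COULOMB EXTRA} is included, its contribution $\dot{\mathbf q}^{\mathsf T}\mathbf Q^{\mathrm C}_\theta\le 0$ can simply be added, and the inequality is unchanged. Second, with $\boldsymbol{\tau}=\mathbf 0$ the energy is nonincreasing, which yields the dissipation and stability consequences.
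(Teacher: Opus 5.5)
Your proposal is correct and follows the paper's proof essentially line by line: differentiate $E_\theta=\dot{\mathbf q}^{\mathsf T}\partial_{\dot{\mathbf q}}\mathcal L_\theta-\mathcal L_\theta$, substitute the forced Euler--Lagrange equations to get $\dot E_\theta=\dot{\mathbf q}^{\mathsf T}(\boldsymbol\tau+\mathbf Q^{\mathrm{diss}}_\theta)$, and bound the dissipative term using $\mathbf D_\theta\succ\mathbf 0$. Your explicit lower-boundedness check on $E_\theta$ and your matrix-form cross-check via $\dot{\mathbf q}^{\mathsf T}\mathbf c=\tfrac12\dot{\mathbf q}^{\mathsf T}\dot{\mathbf M}\dot{\mathbf q}$ are both correct, and are details the paper leaves implicit.
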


\begin{proof}
    The mechanical energy associated with the learned Lagrangian is
    \begin{align}
        E_\theta
        &=
        \dot{\mathbf{q}}^{\mathsf T}
        \frac{\partial \mathcal{L}_\theta}{\partial \dot{\mathbf{q}}}
        - \mathcal{L}_\theta ,
        \\
        \dot{E}_\theta
        &=
        \ddot{\mathbf{q}}^{\mathsf T}
        \frac{\partial \mathcal{L}_\theta}{\partial \dot{\mathbf{q}}}
        +
        \dot{\mathbf{q}}^{\mathsf T}
        \frac{d}{dt}
        \left(
            \frac{\partial \mathcal{L}_\theta}
            {\partial \dot{\mathbf{q}}}
        \right)
        -
        \frac{\partial \mathcal{L}_\theta}{\partial \mathbf{q}}^{\mathsf T}
        \dot{\mathbf{q}}
        -
        \frac{\partial \mathcal{L}_\theta}{\partial \dot{\mathbf{q}}}^{\mathsf T}
        \ddot{\mathbf{q}},
        \\
        &=
        \dot{\mathbf{q}}^{\mathsf T}
        \left[
            \frac{d}{dt}
            \left(
                \frac{\partial \mathcal{L}_\theta}
                {\partial \dot{\mathbf{q}}}
            \right)
            -
            \frac{\partial \mathcal{L}_\theta}{\partial \mathbf{q}}
        \right],
        \\
        &=
        \dot{\mathbf{q}}^{\mathsf T}
        \left(
            \boldsymbol{\tau}
            +
            \mathbf{Q}^{\mathrm{diss}}_\theta
        \right),
        \\
        &=
        \dot{\mathbf{q}}^{\mathsf T}\boldsymbol{\tau}
        -
        \dot{\mathbf{q}}^{\mathsf T}
        \mathbf{D}_\theta(\mathbf{q},\dot{\mathbf{q}})
        \dot{\mathbf{q}},
        \\
        &<
        \dot{\mathbf{q}}^{\mathsf T}\boldsymbol{\tau},
        \qquad
        \dot{\mathbf{q}}\neq\mathbf{0},
    \end{align}
    where the final inequality follows from
    $\mathbf{D}_\theta(\mathbf{q},\dot{\mathbf{q}})\succ\mathbf{0}$.
    Hence, taking $E_\theta$ as the storage function, the learned dynamics are
    passive with input $\boldsymbol{\tau}$ and output $\dot{\mathbf{q}}$~\cite{van_der_schaft_l2-gain_2017}.
\end{proof}

\begin{corollary}
    For the unforced autonomous system, $\boldsymbol{\tau}=\mathbf{0}$, and therefore
    \begin{equation}
        \dot{E}_\theta
        =
        -\dot{\mathbf{q}}^{\mathsf T}
        \mathbf{D}_\theta(\mathbf{q},\dot{\mathbf{q}})
        \dot{\mathbf{q}}
        <0,
        \qquad
        \dot{\mathbf{q}}\neq\mathbf{0}.
    \end{equation}
Hence, the learned mechanical energy decreases strictly whenever the system is in motion. Under the usual conditions required by LaSalle's invariance principle~\cite{lasalle_extensions_1960}, trajectories approach the set of stationary equilibria satisfying $\dot{\mathbf{q}}=\mathbf{0}$ and $\nabla_{\mathbf{q}}V_\theta(\mathbf{q})=\mathbf{0}$. In engineering terms, the unforced system dissipates energy and settles towards rest. Equilibria at strict local minima of $V_\theta$ are stable, although convergence to a particular minimum is not guaranteed by energy dissipation alone.
\end{corollary}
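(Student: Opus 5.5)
The plan is to specialise the energy identity from the proof of the passivity Proposition to $\boldsymbol{\tau}=\mathbf{0}$, and then feed the resulting Lyapunov-type inequality into LaSalle's invariance principle. Setting $\boldsymbol{\tau}=\mathbf{0}$ in the chain of equalities gives $\dot E_\theta=-\dot{\mathbf q}^{\mathsf T}\mathbf D_\theta\dot{\mathbf q}$ directly. Strict negativity for $\dot{\mathbf q}\neq\mathbf 0$ follows from the Cholesky construction: $\mathbf L^{(D)}_\theta$ has strictly positive diagonal, hence is invertible, so $\mathbf D_\theta=\mathbf L^{(D)}_\theta\mathbf L^{(D)\mathsf T}_\theta\succ\mathbf 0$ pointwise. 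If the optional Coulomb branch of Section~\ref{sec:COULOMB EXTRA} is included, I would simply add its contribution $\dot{\mathbf q}^{\mathsf T}\mathbf Q^{\mathrm C}_\theta<0$, which is nonpositive, so the inequality survives.

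For the asymptotic claim, I would use $E_\theta$ as a LaSalle function. Since $V_\theta>0$ via softplus and $\mathbf M_\theta\succeq\epsilon\mathbf I$, $E_\theta$ is bounded below and nonincreasing along trajectories. The ``usual conditions'' I would state explicitly are:
\begin{itemize}
\item smoothness of the networks, giving a locally Lipschitz vector field~\eqref{eq:forward_dyn};
\item precompactness of the trajectory, for instance via compact sublevel sets of $E_\theta$, which holds if $V_\theta$ is radially unbounded or the configuration space is compact (as for angles).
\end{itemize}
LaSalle then says the $\omega$-limit set lies in the largest invariant subset of $\{\dot E_\theta=0\}=\{\dot{\mathbf q}=\mathbf 0\}$. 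On an invariant solution with $\dot{\mathbf q}\equiv\mathbf 0$ we also have $\ddot{\mathbf q}\equiv\mathbf 0$. Moreover, $\mathbf c_\theta(\mathbf q,\mathbf 0)=\mathbf 0$ because $\mathbf c_\theta$ is quadratic in $\dot{\mathbf q}$, and $\mathbf Q^{\mathrm{diss}}_\theta(\mathbf q,\mathbf 0)=\mathbf 0$ by~\eqref{eq:DISS FORCE PROP2}. Substituting these into~\eqref{eq:forward_dyn} and using invertibility of $\mathbf M_\theta$ yields $\nabla_{\mathbf q}V_\theta=\mathbf 0$, which identifies the limit set as stationary equilibria.

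For stability at a strict local minimum $\mathbf q^*$ of $V_\theta$, I would take the Lyapunov function $W=E_\theta-V_\theta(\mathbf q^*)$. Near $(\mathbf q^*,\mathbf 0)$ it is positive definite, since $\frac12\dot{\mathbf q}^{\mathsf T}\mathbf M_\theta\dot{\mathbf q}\ge\frac{\epsilon}{2}\|\dot{\mathbf q}\|^2$ and $V_\theta(\mathbf q)-V_\theta(\mathbf q^*)>0$ for $\mathbf q\neq\mathbf q^*$ nearby. Its derivative $\dot W\le0$, so the standard Lyapunov theorem gives stability. To explain why convergence to a particular minimum is not claimed, I would note that the limit set may contain a continuum of critical points, and in multistable cases which basin is reached depends on the initial state.

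The main obstacle is the LaSalle step, specifically precompactness of trajectories. The softplus bound alone does not make the sublevel sets of $E_\theta$ compact in non-compact configuration spaces, so I would present boundedness as an explicit assumption, matching the corollary's phrase ``under the usual conditions''.
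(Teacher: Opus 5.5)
Your proposal is correct and follows the paper's own route: set $\boldsymbol{\tau}=\mathbf{0}$ in the energy identity from the passivity proof, apply LaSalle's invariance principle with $E_\theta$, and use the energy as a Lyapunov function at strict local minima of $V_\theta$. Where the paper only says ``under the usual conditions'', you supply the missing details explicitly: the precompactness assumption (compact configuration space or radially unbounded $V_\theta$), and the identification of the largest invariant set via $\mathbf c_\theta(\mathbf q,\mathbf 0)=\mathbf 0$, $\mathbf Q^{\mathrm{diss}}_\theta(\mathbf q,\mathbf 0)=\mathbf 0$ and the invertibility of $\mathbf M_\theta$.
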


\begin{remark}
    Notably, the dissipation inequality does not require the classical mechanical structure $\mathcal L_\theta=\frac{1}{2}\dot{\mathbf q}^{\mathsf T}\mathbf M_\theta(\mathbf q)\dot{\mathbf q}-V_\theta(\mathbf q)$. It extends to any time-independent Lagrangian $\mathcal L_\theta(\mathbf q,\dot{\mathbf q})$ for which the associated generalised energy is a valid storage function. Consequently, the proposed dissipative parametrisation is applicable beyond classical mechanical Lagrangians.
\end{remark}

\subsection{Representation of strictly dissipative forces}
\label{sec:express}

Consider a generalised dissipative force $\mathbf{Q}(\mathbf{q},\dot{\mathbf{q}})$ satisfying $\mathbf{Q}(\mathbf{q},\mathbf{0})=\mathbf{0}$ and
\begin{equation}
    \dot{\mathbf{q}}^{\mathsf T}\mathbf{Q}(\mathbf{q},\dot{\mathbf{q}}) < 0,
    \qquad \dot{\mathbf{q}}\neq\mathbf{0}.
\end{equation}

\begin{proposition}[Representation of strictly dissipative forces]
For any $\mathbf{Q}$ satisfying the preceding conditions, there exists a symmetric positive-definite matrix $\mathbf{D}\succ\mathbf{0}$ such that $\mathbf{Q}=-\mathbf{D}(\mathbf{q},\dot{\mathbf{q}})\dot{\mathbf{q}}$.
\end{proposition}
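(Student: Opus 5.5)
Fix $(\mathbf q,\dot{\mathbf q})$ with $\dot{\mathbf q}\neq\mathbf 0$ and build $\mathbf D$ pointwise; at $\dot{\mathbf q}=\mathbf 0$ any choice works, e.g.\ $\mathbf D=\mathbf I$, since $\mathbf Q(\mathbf q,\mathbf 0)=\mathbf 0$. Write $\mathbf v=\dot{\mathbf q}$ and $\mathbf f=-\mathbf Q(\mathbf q,\dot{\mathbf q})$, so the hypothesis reads $\alpha:=\mathbf v^{\mathsf T}\mathbf f>0$. The task is linear algebra: find a symmetric positive-definite $\mathbf D$ with $\mathbf D\mathbf v=\mathbf f$.

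Use a secant-type (BFGS-like) rank-two construction. The candidate is
\begin{equation}
\mathbf D
=
\frac{\mathbf f\mathbf f^{\mathsf T}}{\alpha}
+
\mathbf P,
\qquad
\mathbf P
=
\mathbf I-\frac{\mathbf v\mathbf v^{\mathsf T}}{\mathbf v^{\mathsf T}\mathbf v},
\end{equation}
so $\mathbf P$ is the orthogonal projector onto $\mathbf v^\perp$. Symmetry is immediate. For the interpolation, $\mathbf P\mathbf v=\mathbf 0$ and $\mathbf f\mathbf f^{\mathsf T}\mathbf v/\alpha=\mathbf f$, hence $\mathbf D\mathbf v=\mathbf f=-\mathbf Q$.

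Positive definiteness is the only step needing care. For any $\mathbf x$,
\begin{equation}
\mathbf x^{\mathsf T}\mathbf D\mathbf x=(\mathbf f^{\mathsf T}\mathbf x)^2/\alpha+\|\mathbf P\mathbf x\|^2\ge 0,
\end{equation}
with both terms nonnegative because $\alpha>0$. If this vanishes, then $\mathbf P\mathbf x=\mathbf 0$, so $\mathbf x=c\mathbf v$, and $\mathbf f^{\mathsf T}\mathbf x=c\alpha=0$, giving $c=0$. Hence $\mathbf D\succ\mathbf 0$. This is exactly where strict dissipativity $\alpha>0$ enters: it keeps the rank-one term well defined and makes it control the direction $\mathbf v$ that the projector annihilates.

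Finally, the formula defines $\mathbf D(\mathbf q,\dot{\mathbf q})$ explicitly as a function of the state, so a representation exists globally. If $\mathbf Q$ is continuous, $\mathbf D$ is continuous away from $\dot{\mathbf q}=\mathbf 0$. Continuity at $\dot{\mathbf q}=\mathbf 0$ need not hold, which is consistent with the paper's earlier remark that discontinuous forces such as Coulomb friction may require $\mathbf D$ to be discontinuous or singular. I therefore expect no real obstacle beyond choosing this construction and verifying definiteness.
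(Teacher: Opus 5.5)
Your proof is correct and is the paper's construction. With $\mathbf f=-\mathbf Q$, your term $\mathbf f\mathbf f^{\mathsf T}/\alpha$ equals the paper's $-\mathbf Q\mathbf Q^{\mathsf T}/(\dot{\mathbf q}^{\mathsf T}\mathbf Q)$, and your projector $\mathbf P$ is the paper's second term with $\varepsilon=1$. Your positive-definiteness argument is also the paper's: the projector vanishes only on multiples $c\,\dot{\mathbf q}$, and there the rank-one term gives $c^2\alpha>0$.
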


\begin{proof}
For $\dot{\mathbf{q}}\neq\mathbf{0}$, define
\begin{equation}
\label{eq:D_form_proof}
    \mathbf{D}=-\frac{\mathbf{Q}\mathbf{Q}^{\mathsf T}}{\dot{\mathbf{q}}^{\mathsf T}\mathbf{Q}}
    +\varepsilon\left(\mathbf{I}-\frac{\dot{\mathbf{q}}\dot{\mathbf{q}}^{\mathsf T}}{\|\dot{\mathbf{q}}\|^2}\right),
    \qquad \varepsilon>0,
\end{equation}
where the second term is a positive multiple of the orthogonal
projection onto the subspace perpendicular to $\dot{\mathbf{q}}$.
At $\dot{\mathbf{q}}=\mathbf{0}$, any symmetric positive-definite
$\mathbf{D}(\mathbf{q},\mathbf{0})$ satisfies
$\mathbf{Q}=-\mathbf{D}\dot{\mathbf{q}}$, since
$\mathbf{Q}(\mathbf{q},\mathbf{0})=\mathbf{0}$.
For $\dot{\mathbf{q}}\neq\mathbf{0}$, multiplying
Eq.~\eqref{eq:D_form_proof} by $-\dot{\mathbf{q}}$ recovers
\begin{align}
    -\mathbf{D}\dot{\mathbf{q}}
    &=\frac{\mathbf{Q}\mathbf{Q}^{\mathsf T}\dot{\mathbf{q}}}{\dot{\mathbf{q}}^{\mathsf T}\mathbf{Q}}
    -\varepsilon\left(\mathbf{I}-\frac{\dot{\mathbf{q}}\dot{\mathbf{q}}^{\mathsf T}}{\|\dot{\mathbf{q}}\|^2}\right)\dot{\mathbf{q}} \\
    &=\mathbf{Q}.
\end{align}
It remains to show that $\mathbf{D}$ is positive definite. For any $\mathbf{x}\neq\mathbf{0}$,

\begin{align}
    \mathbf{x}^{\mathsf T}\mathbf{D}\mathbf{x}
    &=
    -\frac{\mathbf{x}^{\mathsf T}\mathbf{Q}\mathbf{Q}^{\mathsf T}\mathbf{x}}
    {\dot{\mathbf{q}}^{\mathsf T}\mathbf{Q}}
    +\varepsilon\left(
        \mathbf{x}^{\mathsf T}\mathbf{x}
        -\frac{\mathbf{x}^{\mathsf T}\dot{\mathbf{q}}\dot{\mathbf{q}}^{\mathsf T}\mathbf{x}}
        {\|\dot{\mathbf{q}}\|^2}
    \right) \\
    &=
    -\frac{(\mathbf{Q}^{\mathsf T}\mathbf{x})^2}
    {\dot{\mathbf{q}}^{\mathsf T}\mathbf{Q}}
    +\varepsilon\left(
        \|\mathbf{x}\|^2
        -\frac{(\dot{\mathbf{q}}^{\mathsf T}\mathbf{x})^2}
        {\|\dot{\mathbf{q}}\|^2}
    \right).
\end{align}
The first term is non-negative since $\dot{\mathbf{q}}^{\mathsf T}\mathbf{Q}<0$. The second term is also non-negative by the Cauchy--Schwarz inequality, $(\dot{\mathbf{q}}^{\mathsf T}\mathbf{x})^2\leq\|\dot{\mathbf{q}}\|^2\|\mathbf{x}\|^2$, and vanishes only when $\mathbf{x}$ is parallel to $\dot{\mathbf{q}}$. For $\mathbf{x}=c\dot{\mathbf{q}}\neq\mathbf{0}$, the first term becomes
\begin{equation}
    -c^2\dot{\mathbf{q}}^{\mathsf T}\mathbf{Q}>0.
\end{equation}
Hence $\mathbf{x}^{\mathsf T}\mathbf{D}\mathbf{x}>0$ for every $\mathbf{x}\neq\mathbf{0}$, and therefore $\mathbf{D}\succ\mathbf{0}$.
\end{proof}

\begin{remark}
    This argument simply proves the existence of $\mathbf{D}(\mathbf{q},\dot{\mathbf{q}})$ for $\mathbf{Q}(\mathbf{q},\dot{\mathbf{q}})$. $\mathbf{D}(\mathbf{q},\dot{\mathbf{q}})$ is not usually unique nor limited to the form in Eq.~\eqref{eq:D_form_proof} for a given $\mathbf{Q}(\mathbf{q},\dot{\mathbf{q}})$. This argument also makes no assumption on the continuity of $\mathbf{D}(\mathbf{q},\dot{\mathbf{q}})$ or $\mathbf{Q}(\mathbf{q},\dot{\mathbf{q}})$.
\end{remark}

\subsection{Conservative--dissipative separation}
\label{sec:SEPER}

Consider a mechanical system with the inverse-dynamics relationship
\begin{equation}
    \boldsymbol{\tau}
    =
    \mathbf{M}(\mathbf{q})\ddot{\mathbf{q}}
    +
    \mathbf{c}(\mathbf{q},\dot{\mathbf{q}})
    +
    \nabla_{\mathbf{q}}V(\mathbf{q})
    -
    \mathbf{Q}(\mathbf{q},\dot{\mathbf{q}}),
\end{equation}
where $\mathbf{c}(\mathbf{q},\dot{\mathbf{q}})$ is determined by
$\mathbf{M}(\mathbf{q})$ as in Eq.~\eqref{eq:corio}.

\begin{proposition}[Uniqueness of the conservative--dissipative decomposition]
If two decompositions produce the same applied forces
$\boldsymbol{\tau}$ for all configurations, velocities, and
accelerations, and both satisfy
$\mathbf{Q}(\mathbf{q},\mathbf{0})=\mathbf{0}$, then their mass
matrices and dissipative forces are identical, and their potential
energies differ only by an additive constant.
\end{proposition}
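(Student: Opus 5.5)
The plan is to subtract the two decompositions and then exploit the different dependence of each term on $\ddot{\mathbf q}$ and $\dot{\mathbf q}$. Let $(\mathbf M_1,V_1,\mathbf Q_1)$ and $(\mathbf M_2,V_2,\mathbf Q_2)$ produce the same $\boldsymbol\tau$ for all $(\mathbf q,\dot{\mathbf q},\ddot{\mathbf q})$. Write $\Delta\mathbf M=\mathbf M_1-\mathbf M_2$, $\Delta V=V_1-V_2$ and $\Delta\mathbf Q=\mathbf Q_1-\mathbf Q_2$. Since $\mathbf c$ in Eq.~\eqref{eq:corio} is linear in $\mathbf M$, we have $\mathbf c_1-\mathbf c_2=\mathbf c_{\Delta\mathbf M}$. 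Subtracting the two inverse-dynamics relations then gives the identity
\begin{equation*}
\Delta\mathbf M(\mathbf q)\ddot{\mathbf q}+\mathbf c_{\Delta\mathbf M}(\mathbf q,\dot{\mathbf q})+\nabla_{\mathbf q}\Delta V(\mathbf q)-\Delta\mathbf Q(\mathbf q,\dot{\mathbf q})=\mathbf 0
\end{equation*}
for all $\mathbf q,\dot{\mathbf q},\ddot{\mathbf q}$.

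\emph{Step 1 (mass matrix).} Only the first term depends on $\ddot{\mathbf q}$. Fixing $(\mathbf q,\dot{\mathbf q})$ and evaluating the identity at $\ddot{\mathbf q}=\mathbf e_i$ and at $\ddot{\mathbf q}=\mathbf 0$, the difference gives $\Delta\mathbf M(\mathbf q)\mathbf e_i=\mathbf 0$ for every $i$. Hence $\Delta\mathbf M\equiv\mathbf 0$, and consequently $\mathbf c_{\Delta\mathbf M}\equiv\mathbf 0$.

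\emph{Step 2 (potential).} The identity now reduces to $\nabla_{\mathbf q}\Delta V(\mathbf q)=\Delta\mathbf Q(\mathbf q,\dot{\mathbf q})$ for all $\mathbf q,\dot{\mathbf q}$. Setting $\dot{\mathbf q}=\mathbf 0$ and using the hypothesis $\mathbf Q_1(\mathbf q,\mathbf 0)=\mathbf Q_2(\mathbf q,\mathbf 0)=\mathbf 0$ yields $\nabla_{\mathbf q}\Delta V\equiv\mathbf 0$. Provided the configuration space is connected, $\Delta V$ is therefore constant.

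\emph{Step 3 (dissipative force).} Substituting $\nabla_{\mathbf q}\Delta V\equiv\mathbf 0$ back into the reduced identity gives $\Delta\mathbf Q\equiv\mathbf 0$ for all velocities, so the dissipative forces agree.

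The only delicate point is the implicit regularity assumptions. Step 2 needs $V$ to be differentiable and the configuration domain to be connected; otherwise $\Delta V$ is only locally constant. I would state this explicitly in the proof. The argument never uses the inverse-dynamics relation at nonzero velocity to identify $V$; that work is done entirely by $\mathbf Q(\mathbf q,\mathbf 0)=\mathbf 0$. This is exactly why the dissipative branch cannot absorb a gradient term: a $\dot{\mathbf q}$-independent force $\nabla_{\mathbf q}\Delta V$ would violate that condition unless it vanishes. No positive-definiteness of $\mathbf D$ is needed beyond what implies Eq.~\eqref{eq:DISS FORCE PROP2}. The condition applies equally to the optional Coulomb branch of Section~\ref{sec:COULOMB EXTRA}, since $\mathbf Q^{\mathrm C}_\theta(\mathbf q,\mathbf 0)=\mathbf 0$ there as well.
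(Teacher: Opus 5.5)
Your proposal is correct and follows the paper's proof essentially step for step. Like the paper, you identify $\mathbf M$ from the $\ddot{\mathbf q}$-dependence, reduce to $\nabla_{\mathbf q}\Delta V=\Delta\mathbf Q$, evaluate at $\dot{\mathbf q}=\mathbf 0$ to kill $\nabla_{\mathbf q}\Delta V$, and then conclude $\Delta\mathbf Q\equiv\mathbf 0$; your explicit basis-vector argument for $\Delta\mathbf M=\mathbf 0$ and your remark that $\Delta V$ is constant only on a connected configuration domain add rigour that the paper leaves implicit.
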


\begin{proof}
Suppose that a second set $\mathbf{M}'(\mathbf{q})$, $V'(\mathbf{q})$, and $\mathbf{Q}'(\mathbf{q},\dot{\mathbf{q}})$ describes exactly the same inverse dynamics. Since $\mathbf{M}(\mathbf{q})$ alone determines the dependence of $\boldsymbol{\tau}$ on $\ddot{\mathbf{q}}$, equality for all $\ddot{\mathbf{q}}$ requires
\begin{equation}
    \mathbf{M}'(\mathbf{q})=\mathbf{M}(\mathbf{q}).
\end{equation}
The corresponding term $\mathbf{c}(\mathbf{q},\dot{\mathbf{q}})$ is therefore also unchanged. Let
\begin{equation}
    V' = V+\Delta V,
    \qquad
    \mathbf{Q}'=\mathbf{Q}+\Delta\mathbf{Q}.
\end{equation}
For the two decompositions to produce the same dynamics,
\begin{equation}
    \nabla_{\mathbf{q}}\Delta V(\mathbf{q})
    =
    \Delta\mathbf{Q}(\mathbf{q},\dot{\mathbf{q}})
\end{equation}
for all $\mathbf{q}$ and $\dot{\mathbf{q}}$. Evaluating this at $\dot{\mathbf{q}}=\mathbf{0}$ and using $\mathbf{Q}(\mathbf{q},\mathbf{0})=\mathbf{Q}'(\mathbf{q},\mathbf{0})=\mathbf{0}$ gives
\begin{equation}
    \nabla_{\mathbf{q}}\Delta V(\mathbf{q})=\mathbf{0}.
\end{equation}
Hence $\Delta V$ can only be an additive constant. It then follows that $\Delta\mathbf{Q}=\mathbf{0}$ for all $\mathbf{q}$ and $\dot{\mathbf{q}}$. Therefore $\mathbf{M}$ and $\mathbf{Q}$ are unique, while $V$ is unique up to an arbitrary additive constant.
\end{proof}

\begin{remark}
    It should be noted that arbitrary black-box generalised forces $\mathbf{Q}_\theta(\mathbf{q},\dot{\mathbf{q}})$ or black-box Rayleigh dissipation functions $\mathcal{R}_\theta(\mathbf{q},\dot{\mathbf{q}})$ do not have this uniqueness property and the dissipation term can readily absorb conservative dynamics, especially from $V_\theta$.
\end{remark}

\begin{remark}
This result establishes uniqueness when the inverse-dynamics relationship is known exactly for all configurations, velocities, and accelerations. It does not guarantee that training on finite, noisy data will recover the true physical components, since limited data coverage, model approximation, and optimisation errors may
prevent exact identification.
\end{remark}

\begin{remark}
The same uniqueness argument applies to forced forward dynamics when the acceleration response is known for all states and applied forces. For unforced dynamics, however, multiplying $\mathbf{M}$, $V$, and $\mathbf{Q}$ by the same positive constant leaves the predicted motion unchanged, as does adding a constant to $V$. The decomposition therefore cannot be uniquely identified from unforced motion alone; additional ambiguities may also exist
depending on the system.
\end{remark}

\section{Validation with synthetic and real-world data}
To validate this architecture, the model is evaluated on two mechanical systems of varying complexity. The first is the chaotic double pendulum. This system exhibits fine sensitivity to model accuracy where very small modelling errors can quickly propagate to the full divergence of predicted trajectories~\cite{shinbrot_chaos_1992}. This property makes it a particularly attractive benchmark for model comparison and allows for a clean separation for even very subtle disparities in modelling performance. The models are initially validated using simulated data and subsequently validated using experimental data.

The second more complex problem concerns the inverse modelling of a real six-degree-of-freedom industrial robotic arm~\cite{weigand_dataset_2022}. For this problem, there is significantly more complexity in both the conservative and dissipative components of the true dynamics. The frictional low-speed behaviour is notably complex and the conservative dynamics concern a real, three-dimensional, forced mechanical system with six degrees of freedom.

\newcolumntype{Y}{>{\centering\arraybackslash}X}
\begin{table}[htbp]
    \centering
    \small
    
    \caption{Summary of the three benchmark datasets and associated training and evaluation tasks. Double-pendulum dataset sizes are reported as sampling rate $\times$ trajectory duration $\times$ number of trajectories; robot dataset sizes are reported as sampling rate $\times$ total duration.}
    \label{tab:benchmarks}
    
    \begin{tblr}{
        width=\textwidth,
        colspec={X[l] Q[c] X[c] X[c] X[c] X[c]},
        rows={m},
        hlines,
        vline{1,2,7}={solid},
    }
    \textbf{System} &
    \textbf{DoF} &
    \textbf{Training task} &
    \textbf{Evaluation task} &
    \textbf{Training data} &
    \textbf{Test data} \\
    
    Simulated double pendulum &
    2 &
    $(\mathbf{q}_t,\dot{\mathbf{q}}_t)\rightarrow\ddot{\mathbf{q}}_t$ &
    $(\mathbf{q}_0,\dot{\mathbf{q}}_0)\rightarrow
    (\mathbf{q}_{1:T},\dot{\mathbf{q}}_{1:T})$ &
    $90\,\mathrm{Hz}\times11.7\,\mathrm{s}\times445$ &
    $90\,\mathrm{Hz}\times5\,\mathrm{s}\times350$ \\
    
    Experimental double pendulum &
    2 &
    $(\mathbf{q}_t,\dot{\mathbf{q}}_t)\rightarrow
    (\mathbf{q}_{t+1},\dot{\mathbf{q}}_{t+1})$ &
    $(\mathbf{q}_0,\dot{\mathbf{q}}_0)\rightarrow
    (\mathbf{q}_{1:T},\dot{\mathbf{q}}_{1:T})$ &
    $90\,\mathrm{Hz}\times11.7\,\mathrm{s}\times5$ &
    $90\,\mathrm{Hz}\times5\,\mathrm{s}\times350$ \\
    
    Industrial robot arm &
    6 &
    $(\mathbf{q}_t,\dot{\mathbf{q}}_t,\ddot{\mathbf{q}}_t)
    \rightarrow\boldsymbol{\tau}_t$ &
    $(\mathbf{q}_t,\dot{\mathbf{q}}_t,\ddot{\mathbf{q}}_t)
    \rightarrow\boldsymbol{\tau}_t$ &
    $10\,\mathrm{Hz}\times4020\,\mathrm{s}$ &
    $10\,\mathrm{Hz}\times360\,\mathrm{s}$ \\
    \end{tblr}

\end{table}

Across these benchmarks (Table~\ref{tab:benchmarks}), the model is validated on both toy and industrial problems; real and simulated data; forced and unforced systems; and through training on acceleration prediction (Eq.\eqref{eq:accel_training}), trajectory prediction (Eq.\eqref{eq:traj_training}), and torque prediction (Eq.\eqref{eq:inv_training}). The double pendulum dataset and code implementations for both the double pendulum and robot arm are available on \url{https://github.com/robseabourne/Dissipative_Lagrangian_Neural_Networks}.

\begin{figure}[h]
    \centering

    \begin{minipage}[c]{0.44\linewidth}
        \centering
        \includegraphics[width=\linewidth]{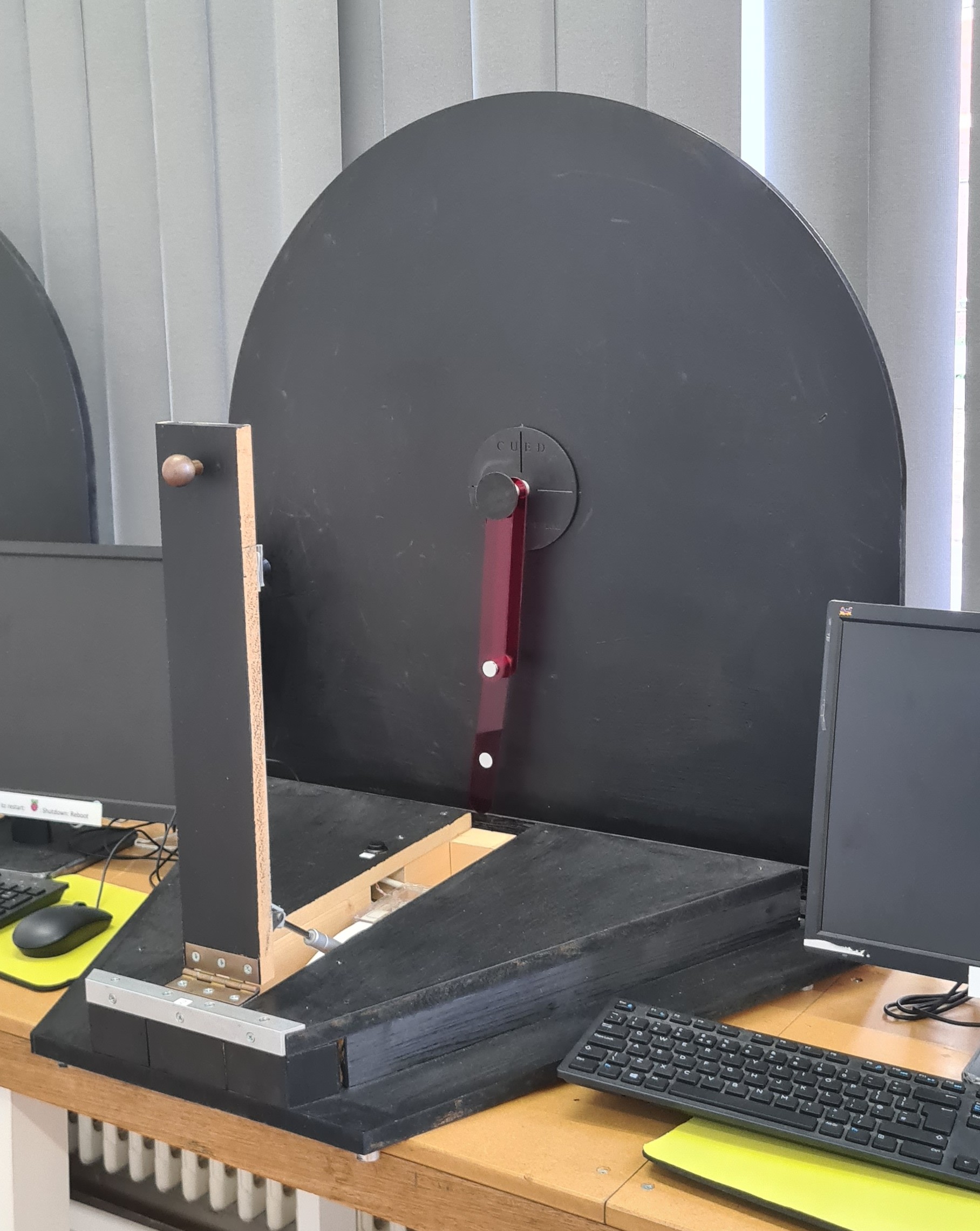}
    \end{minipage}
    \hfill
    \begin{minipage}[c]{0.54\linewidth}
        \centering
        \includegraphics[width=0.9\linewidth]{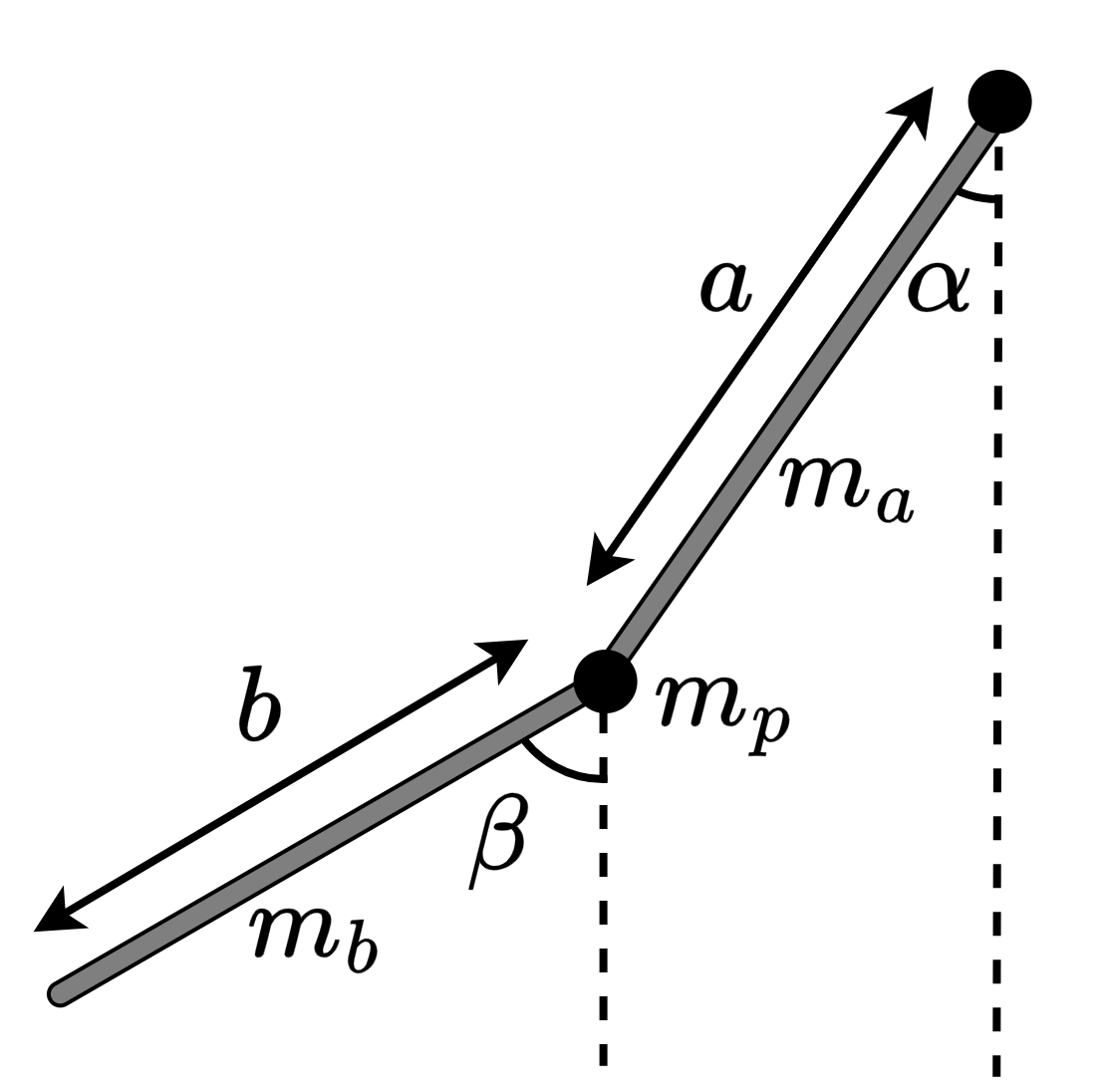}
    \end{minipage}

    \caption{Double-pendulum benchmark system. (\textbf{Left}) Experimental double-pendulum apparatus used to collect the real-world trajectory data. (\textbf{Right}) Corresponding mechanical model, showing the absolute angles $\alpha$ and $\beta$, rod lengths $a$ and $b$, rod masses $m_a$ and $m_b$, and intermediate point mass $m_p$.}
    \label{fig:APPARATUS}
\end{figure}
\subsection{Double-pendulum benchmarks}
The first benchmark problem is a simulated chaotic double pendulum, providing the most idealised conditions for the models, with no measurement noise, external disturbances, or temporal variation in the system dynamics. Nominal system parameters are identified to match the dynamics of a real physical double pendulum within our lab. The analytical model (Eqs.~\eqref{eq:double_pendulum_alpha} and~\eqref{eq:double_pendulum_beta}) is simulated for friction parameters that are 0\%, 5\%, 20\%, 100\% and 200\% of the nominal $T_1,T_2,b_1,b_2$. This allows us to test the relative performance of the models under increasing amounts of dissipation, bridging the gap between perfectly conservative systems and realistic levels of dissipation.

The training dataset contains 445 simulated trajectories of 11.7 s at a sampling rate of 90 Hz. These trajectories are split into individual time steps and the model is trained on batches of input-output pairs $\mathbf{q}_t,\dot{\mathbf{q}}_t \rightarrow \ddot{\mathbf{q}}_t$. 

The test set contains 50 trajectories of 11.7 s  which are split into 350 overlapping trajectories of 5 s, beginning at 0 s, 1 s ... 6 s. For each test trajectory, a model will predict an entire test trajectory of 5 s from the initial conditions $\mathbf{q}_0,\dot{\mathbf{q}}_0$. The predictions are then evaluated for both pointwise prediction accuracy and physical realism. The first is evaluated by the time $t_\mathrm{div}$ at which the predicted lower-arm angle $\beta$ first differs from the corresponding measured angle by $180^\circ$ (with unwrapped angles), i.e., when the prediction has completely diverged from reality. If trajectories do not diverge, $t_\mathrm{div}$ is counted as the full 5 s. The latter is evaluated using the mean-squared error between the energies of the measured and predicted trajectories, calculated using the analytical energy function $E(\mathbf{q},\dot{\mathbf{q}})$ defined in Appendix \ref{sec:DP-EOM}, Eq.~\eqref{eq:ENERGY},
\begin{equation}
    \text{Energy MSE}
    =
    \frac{1}{NK}
    \sum_{j=1}^{N}
    \sum_{k=1}^{K}
    \left(
        E(\mathbf{q}^{(j)}_k,\dot{\mathbf{q}}^{(j)}_k)
        -
        E(\widetilde{\mathbf{q}}^{(j)}_k,
          \widetilde{\dot{\mathbf{q}}}^{(j)}_k)
    \right)^2,
\end{equation}
where $j$ indexes the $N$ test trajectories and $k$ indexes the $K$ time steps within each trajectory.

It is imperative to also test the model on imperfect real-world data to evaluate the robustness of the dLNN's performance. To achieve this, a dataset is collected from a real double pendulum. In this apparatus (Figure~\ref{fig:APPARATUS}), reflective markers on the two pendulum arms are tracked by an IR camera at 90 Hz. A computer vision algorithm calculates $\alpha$ and $\beta$ from the marker positions, and $\dot{\alpha}$ and $\dot{\beta}$ are estimated using the central difference method. To evaluate the system performance in the low-data regime, this time the training set only consists of five trajectories of 11.7 s. The test set similarly contains 350 trajectories of 5 s each.

Trajectory-based training is used for the experimental data because accurate acceleration measurements are unavailable. The learned acceleration field is integrated using a fixed-step fourth-order Runge–Kutta (RK4) solver, with a step size equal to the data sampling interval. Training uses a one-step prediction horizon, with overlapping state pairs $(\mathbf{q}_t,\dot{\mathbf{q}}_t)\rightarrow(\mathbf{q}_{t+1},\dot{\mathbf{q}}_{t+1})$.

To evaluate the performance of the models, the same metrics of divergence time $t_{\mathrm{div}}$ and energy MSE are used. The simulation parameters are chosen to match the experimental double pendulum, allowing the same analytical energy expression, Eq.~\eqref{eq:ENERGY}, to be used to calculate the energy MSE for the real system. State estimation errors and uncertainty in the identified parameters affect the accuracy of the calculated energy. To reduce noise, both the measured and predicted energy histories are smoothed using a Gaussian filter with $\sigma=0.11$ s before calculating the MSE.

\subsection{Robot inverse-dynamics benchmark}
To evaluate the ability of the dLNN architecture to model more complex inverse dynamics, the Industrial Robot Identification Benchmark dataset developed by Weigand et al. is used~\cite{weigand_dataset_2022}. The dataset comprises 73 minutes of measured trajectory and torque data from a six-degree-of-freedom KUKA KR300 R2500 ultra SE industrial robot arm, processed and resampled at 10 Hz. The authors of this benchmark claim \textit{"A major challenge of the benchmark are nonlinear effects which occur during standstill or low-velocity phase}", highlighting the complex low-velocity frictional behaviour of the system, making it an interesting testing ground for the dLNN model. Model performance is evaluated by calculating the $R^2$ score for the predicted torques of each joint over the test set and then averaging these scores across the six joints, as described in detail in~\cite{weigand_dataset_2022}.

Owing to this complex low-velocity friction behaviour, the dLNN model uses the Coulomb-style augmentation introduced in Section~\ref{sec:COULOMB EXTRA} for this benchmark, with smoothing parameter $k=10^6$. An equivalent augmentation is not used for the double-pendulum benchmarks, where it was found to provide no meaningful improvement.

\subsection{Baselines}
Alongside the dLNN, some standard baseline models are used for comparison. The first is the Generalised Lagrangian Neural Network or gLNN~\cite{xiao_generalized_2025}. For this implementation, the conservative dynamics are learned in the same structured way as the dLNN but the dissipative forces $\mathbf{Q}_\theta(\mathbf{q},\dot{\mathbf{q}})$ are parametrised directly as the output of an unconstrained neural network. The second baseline model is the standard Lagrangian Neural Network or LNN~\cite{cranmer_lagrangian_2020,lutter_deep_2019}. In this implementation, this model is identical in structure to the conservative part of the dLNN as described in Section~\ref{sec:Learning Conservative Mechanics} with no generalised forces learned. 

The final baseline model is an unconstrained black-box neural network. For the forward-dynamics task, this directly maps $(\mathbf{q}_t,\dot{\mathbf{q}}_t,(\boldsymbol{\tau}_t) \rightarrow \ddot{\mathbf{q}}_t)$ and is referred to as a Neural Ordinary Differential Equation or nODE~\cite{chen_neural_2018} to match existing literature. For the inverse-dynamics task, a black-box model directly learns $(\mathbf{q}_t,\dot{\mathbf{q}}_t,\ddot{\mathbf{q}}_t \rightarrow \boldsymbol{\tau}_t)$ and is referred to simply as a Multi-Layer-Perceptron or MLP, again to match existing inverse robotics literature nomenclature.
Additionally, for the inverse robotics benchmark, a linear baseline is provided that has been identified using traditional system identification methods~\cite{weigand_dataset_2022}.

\subsection{Training details and hyperparameters}

\begin{table}[htbp]
\centering
\small
\setlength{\tabcolsep}{6pt}
\renewcommand{\arraystretch}{1.3}
\caption{Neural-network architectures and training parameters used for the double-pendulum and industrial robot benchmarks.}
\label{tab:training_parameters}
\begin{tabular}{|l|cc|}
\hline
Parameter & Double pendulum & Robot \\
\hline
Primary network size (width $\times$ depth) & $128\times4$ & $256\times4$ \\
gLNN/dLNN force network size (width $\times$ depth) & $64\times3$ & $64\times3$ \\
Learning rates, $\mathrm{LR}1 \rightarrow \mathrm{LR}2$ & $10^{-2}\rightarrow10^{-4}$ & $10^{-2}\rightarrow10^{-3}$ \\
Training time & $10\,\mathrm{min}$ & $2\,\mathrm{min}$ \\
Mass-matrix offset, $\epsilon$ & $10^{-2}$ & $10^{-1}$ \\
Regularisation & None & Dropout $p=0.4$ \\
\hline
\end{tabular}
\end{table}
To establish the optimal architecture dimensions for each model, a TPE Bayesian Hyper-parameter optimisation was completed using the Python package Optuna~\cite{akiba_optuna_2019} to optimise over layer number, layer width and learning rates for both benchmarks, using a validation set partitioned from the training data. For all model architectures, near-optimal performance was achieved using $128\times4$ networks for the double-pendulum benchmark and $256\times4$ networks for the robot benchmark, for both the black-box models (nODE/MLP) and the conservative branches of the LNN, gLNN, and dLNN. The non-conservative branches of the gLNN and dLNN used $64\times3$ networks for both benchmarks. It should be noted that model performance was observed to be robust to even significant changes in these hyper-parameters. The activation function GELU was found to be optimal. To encode periodicity, all angle inputs are expanded through $\sin$ and $\cos$ activations before entering the networks.

To optimise the model parameters, a typical Adam optimiser~\cite{kingma_adam_2015} was used alongside an exponential learning rate scheduler, sweeping between two fixed rates, LR1 and LR2. Batch sizes were chosen to maximise GPU utilisation within available memory constraints, with values of approximately 100,000 found to give the best performance. Overall, the choices of model dimensions, batch size, and learning rate were found to impact the training rate but did not noticeably affect the final performance on the validation set.

A common problem with LNN-style architectures for forward-dynamics problems (Eq.~\eqref{eq:forward_dyn}) is the necessity to invert a matrix in training. This can cause exploding gradients and training instability if ill-conditioned mass matrices are encountered in training. To solve this, a small diagonal offset $\epsilon \mathbf{I}$ is added to the learned mass matrix in Eq.~\eqref{eq:MASS_MATRIX}, and gradient clipping~\cite{hamzaogullari_learning_2026} is implemented. These are found to solve this issue completely. For the gLNN and dLNN architectures, it was also found to be beneficial to scale the output of the non-conservative network by a factor of 0.01, which reflected the expected scale of the dissipation term.

The benefit of embedding structure into the models can be thought of as a strong form of regularisation. As such, for the double pendulum benchmark, no explicit regularisation is used. However, for the robot benchmark, the availability of data compared to the model dimension was significantly reduced so the risk of overfitting was more serious. For this benchmark, weight decay~\cite{loshchilov_decoupled_2017} was observed to be ineffective but dropout~\cite{srivastava_dropout_nodate} was found to be extremely strong for reducing overfitting and significantly improved final validation set performance.

Every model was trained within a Google Colab environment using an NVIDIA A100 GPU. The models were trained for 2 minutes for the robot task and 10 minutes for both double pendulum tasks. This was more than enough time for all models to plateau to their optimal validation set performances. Every experiment was repeated three times with different model seeds, and results were simply averaged.

\section{Results}

\subsection{Simulated double pendulum}

\begin{figure}[h]
    \centering

    \begin{minipage}[c]{0.48\linewidth}
        \centering
        \includegraphics[width=\linewidth]{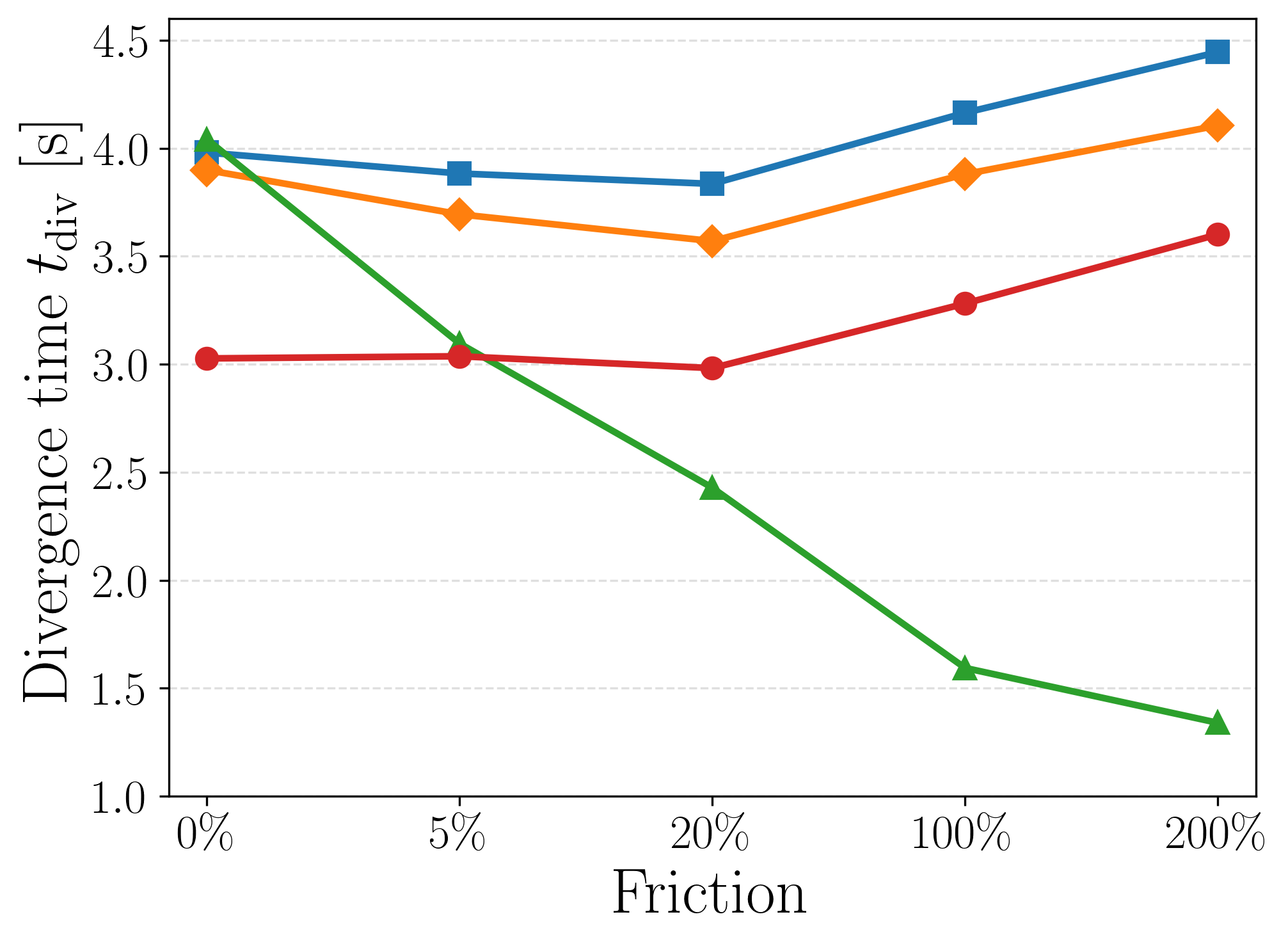}
    \end{minipage}
    \hfill
    \begin{minipage}[c]{0.48\linewidth}
        \centering
        \includegraphics[width=\linewidth]{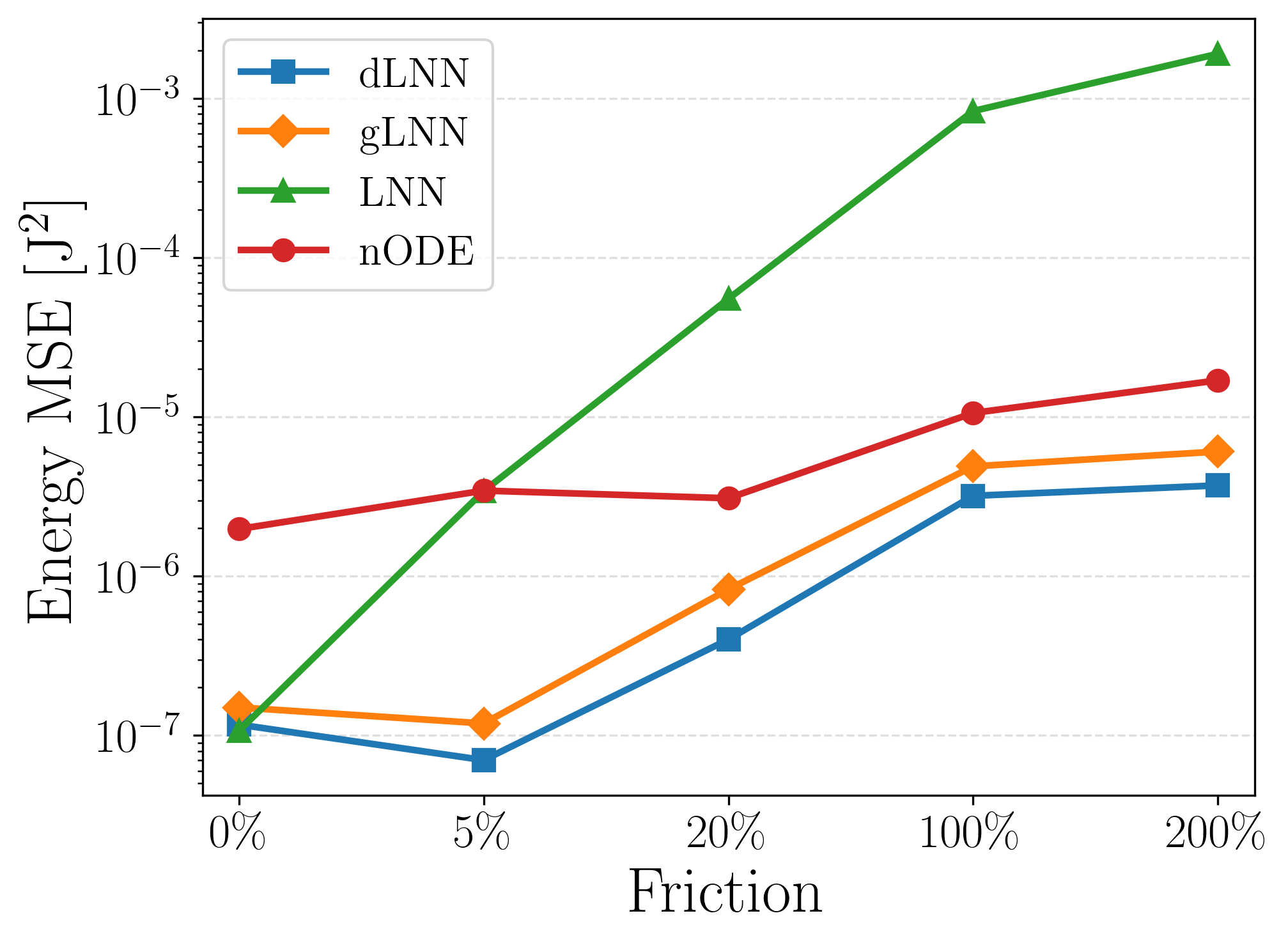}
    \end{minipage}

    \caption{Trajectory-prediction performance on the simulated double-pendulum benchmark across increasing levels of friction. (\textbf{Left}) Mean divergence time $t_{\mathrm{div}}$. (\textbf{Right}) Mean energy mean-squared error (MSE). Friction is expressed as a percentage of the nominal experimental friction parameters.}
    \label{fig:sim_results}
\end{figure}

Figure~\ref{fig:sim_results} shows the trajectory-prediction performance across the simulated friction sweep. A consistent performance hierarchy is observed across all non-zero friction levels and both evaluation metrics, with the dLNN achieving the highest performance, followed by the gLNN and nODE. At the nominal friction level ($100\%$), the dLNN achieves an average divergence time of $4.16\,\mathrm{s}$ and an energy MSE of $3.2\times10^{-6}\,\mathrm{J}^2$, compared with $3.88\,\mathrm{s}$ and $4.9\times10^{-6}\,\mathrm{J}^2$ for the gLNN, and $3.28\,\mathrm{s}$ and $1.1\times10^{-5}\,\mathrm{J}^2$ for the nODE.

The LNN performs comparably to the dLNN and gLNN in the absence of friction, but its performance decreases rapidly as friction is introduced. This degradation is already apparent at $5\%$ of the nominal friction magnitude and becomes progressively larger with increasing friction. At nominal friction, the LNN achieves a divergence time of $1.59\,\mathrm{s}$ and an energy MSE of $8.4\times10^{-4}\,\mathrm{J}^2$, approximately $260\times$ greater than that of the dLNN.

\subsection{Experimental double pendulum}
\begin{figure}[h]
    \centering

    \begin{minipage}[c]{0.48\linewidth}
        \centering
        \includegraphics[width=\linewidth]{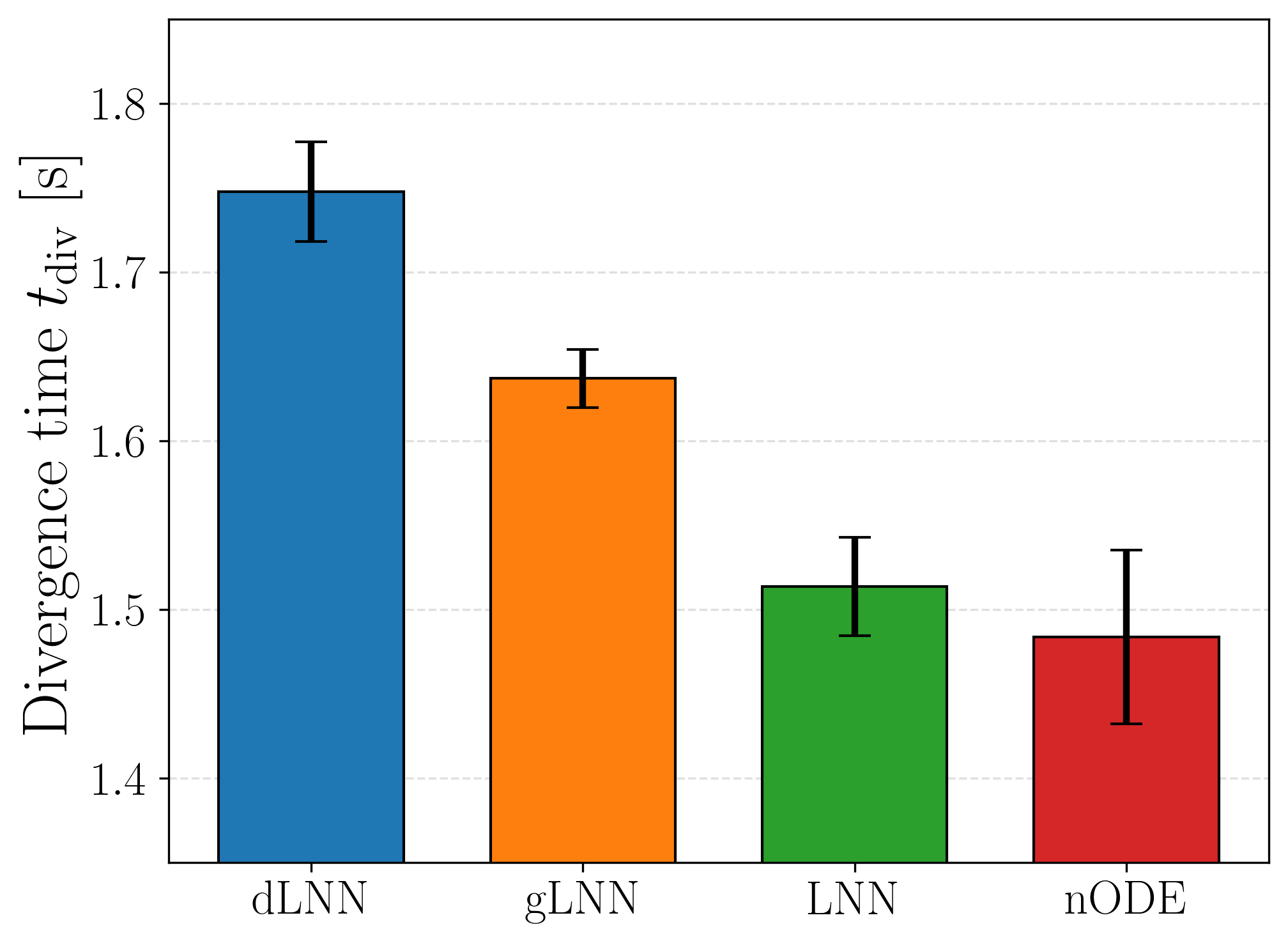}
    \end{minipage}
    \hfill
    \begin{minipage}[c]{0.48\linewidth}
        \centering
        \includegraphics[width=\linewidth]{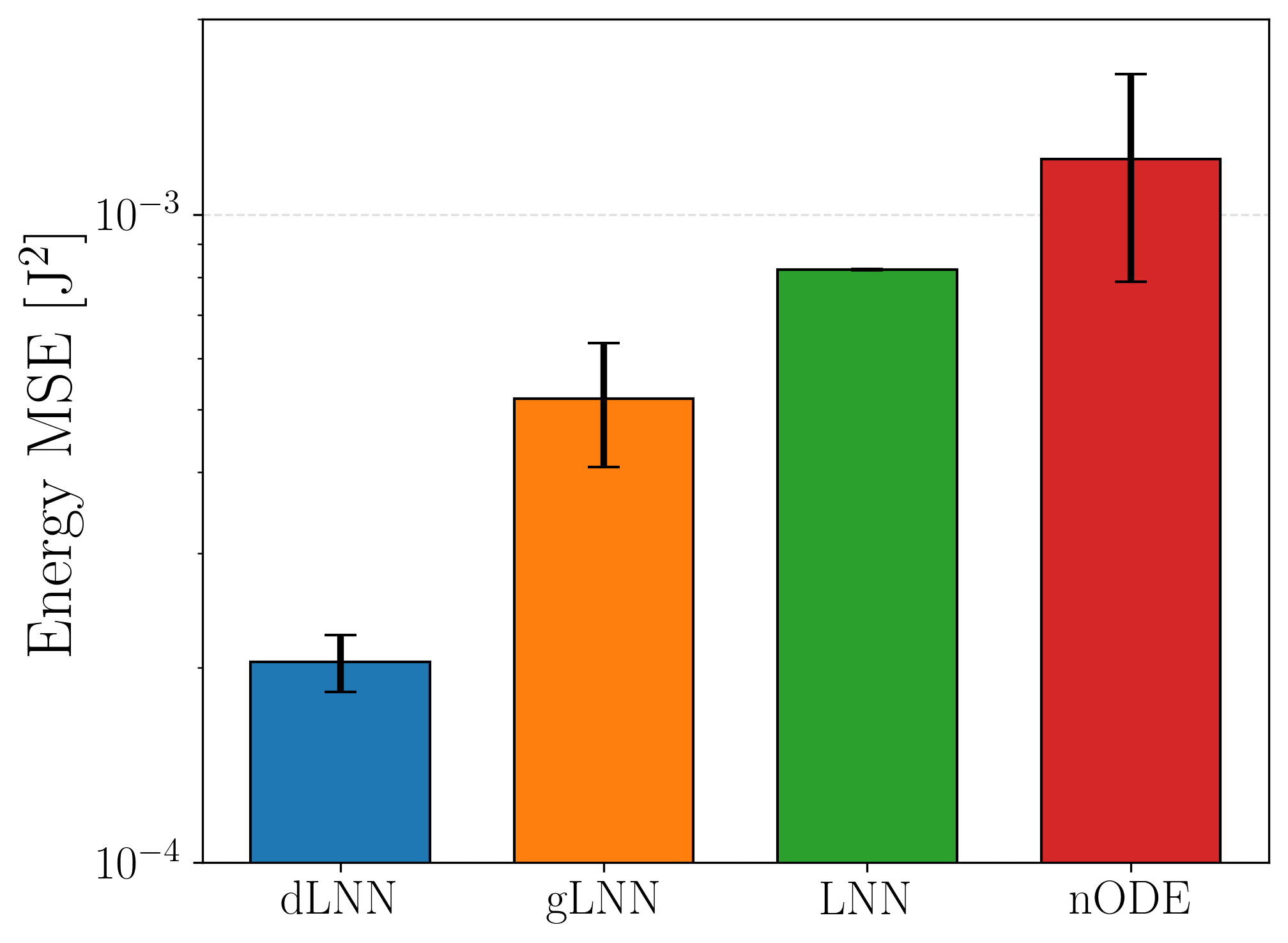}
    \end{minipage}

    \caption{Trajectory-prediction performance on the reduced experimental double-pendulum benchmark. (\textbf{Left}) Mean divergence time $t_{\mathrm{div}}$. (\textbf{Right}) Mean energy mean-squared error (MSE). Error bars show performance range over three random initialisations.}
    \label{fig:exp_results}
\end{figure}

Figure~\ref{fig:exp_results} shows the trajectory-prediction performance on the reduced experimental double-pendulum dataset. The dLNN achieves the highest performance in both evaluation metrics, followed by the gLNN. The dLNN and gLNN achieve average divergence times of $1.75\,\mathrm{s}$ and $1.64\,\mathrm{s}$, respectively, with the dLNN achieving an energy MSE approximately 40\% of that of the gLNN. The LNN and nODE show lower and similar trajectory-prediction performance, with average divergence times of $1.51\,\mathrm{s}$ and $1.48\,\mathrm{s}$, respectively.

In addition to the aggregate performance metrics, a small number of nODE predictions exhibit non-physical unstable behaviour. An example is shown in Figure~\ref{fig:unstable_prediction} (alongside a representative dLNN prediction). This instability is never observed in the dLNN predictions.

\begin{figure}[h]
    \centering

    \begin{minipage}[c]{\linewidth}
        \centering
        \includegraphics[width=\linewidth]{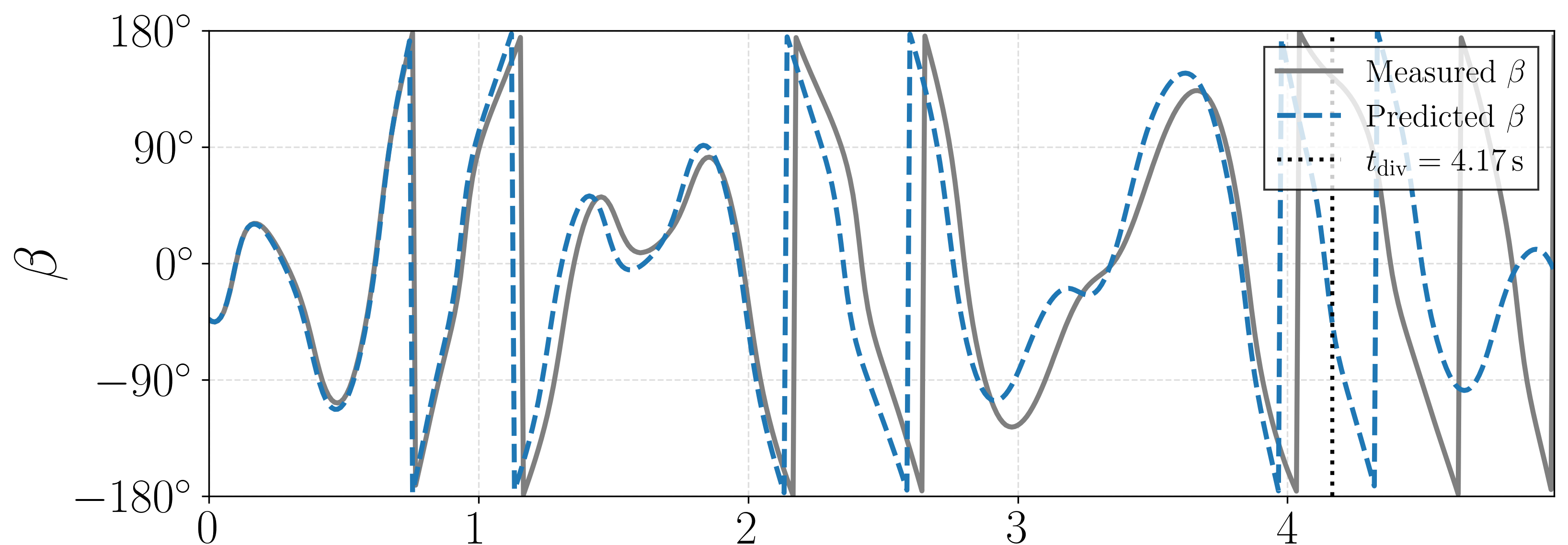}
    \end{minipage}

    \vspace{0.2cm}

    \begin{minipage}[c]{\linewidth}
        \centering
        \includegraphics[width=\linewidth]{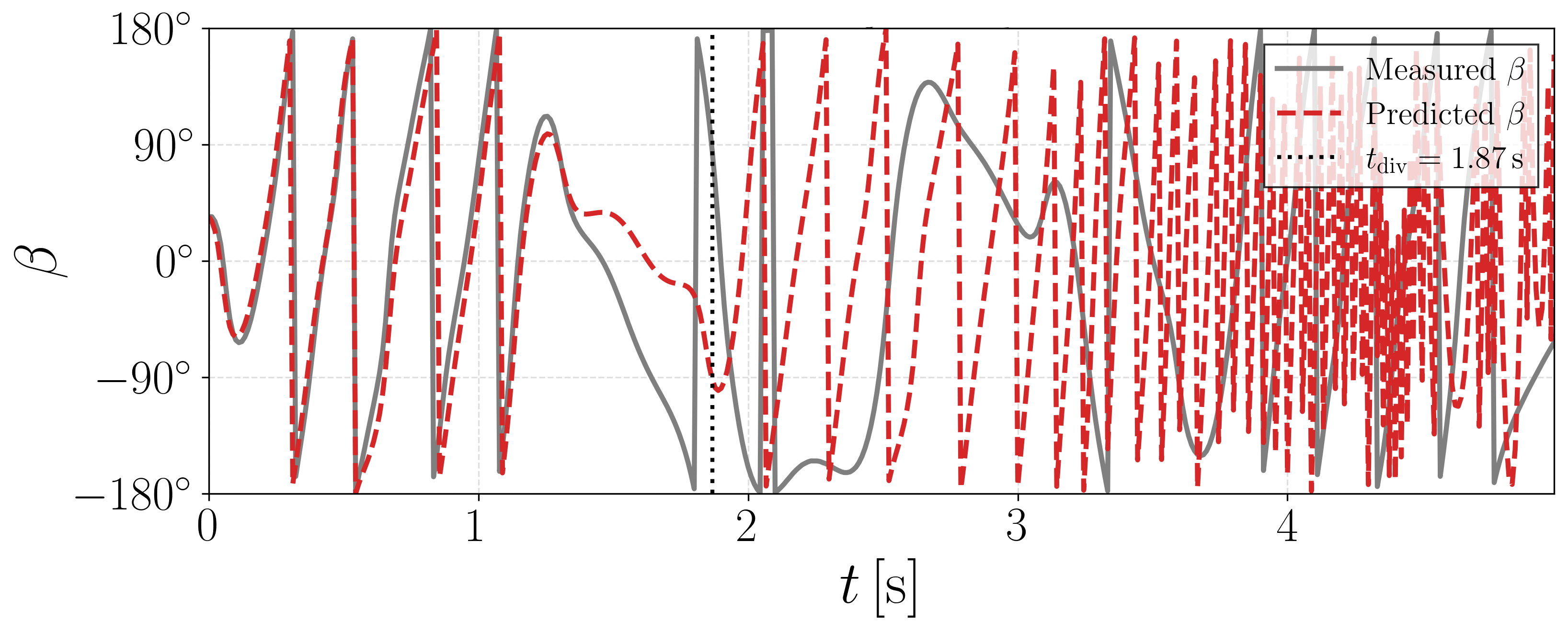}
    \end{minipage}

    \caption{Example trajectory predictions from the experimental double-pendulum benchmark, illustrating the predictive performance of the dLNN and the non-physical instability observed in a small subset of nODE predictions. (\textbf{Top}) Measured and dLNN-predicted lower-arm angle $\beta$ for trajectory 166, with initial state $(\alpha_0,\dot{\alpha}_0,\beta_0,\dot{\beta}_0)=(43.7^\circ,-446.8^\circ\,\mathrm{s}^{-1},-43.1^\circ,-196.2^\circ\,\mathrm{s}^{-1})$. (\textbf{Bottom}) Measured and nODE-predicted lower-arm angle $\beta$ for trajectory 99, with initial state $(-60.4^\circ,814.6^\circ\,\mathrm{s}^{-1},34.4^\circ,-202.4^\circ\,\mathrm{s}^{-1})$. Dotted vertical lines indicate the divergence time $t_{\mathrm{div}}$.}
    \label{fig:unstable_prediction}
\end{figure}

\subsection{Robot inverse dynamics}

\begin{table}[h]
\centering
\small
\setlength{\tabcolsep}{6pt}
\renewcommand{\arraystretch}{1.3}
\caption{Inverse-dynamics prediction performance on the industrial robot benchmark. The reported $R^2$ is averaged across the six robot joints.}
\label{tab:robot_r2}
\begin{tabular}{|l|c|}
\hline
Model & Overall $R^2$ (\%) \\
\hline
dLNN   & 88.8 \\
MLP    & 88.1 \\
gLNN   & 74.4 \\
Linear & 62.0 \\
LNN    & 57.5 \\
\hline
\end{tabular}
\end{table}

\begin{figure}[hbt]
    \centering
    \includegraphics[width=\linewidth]{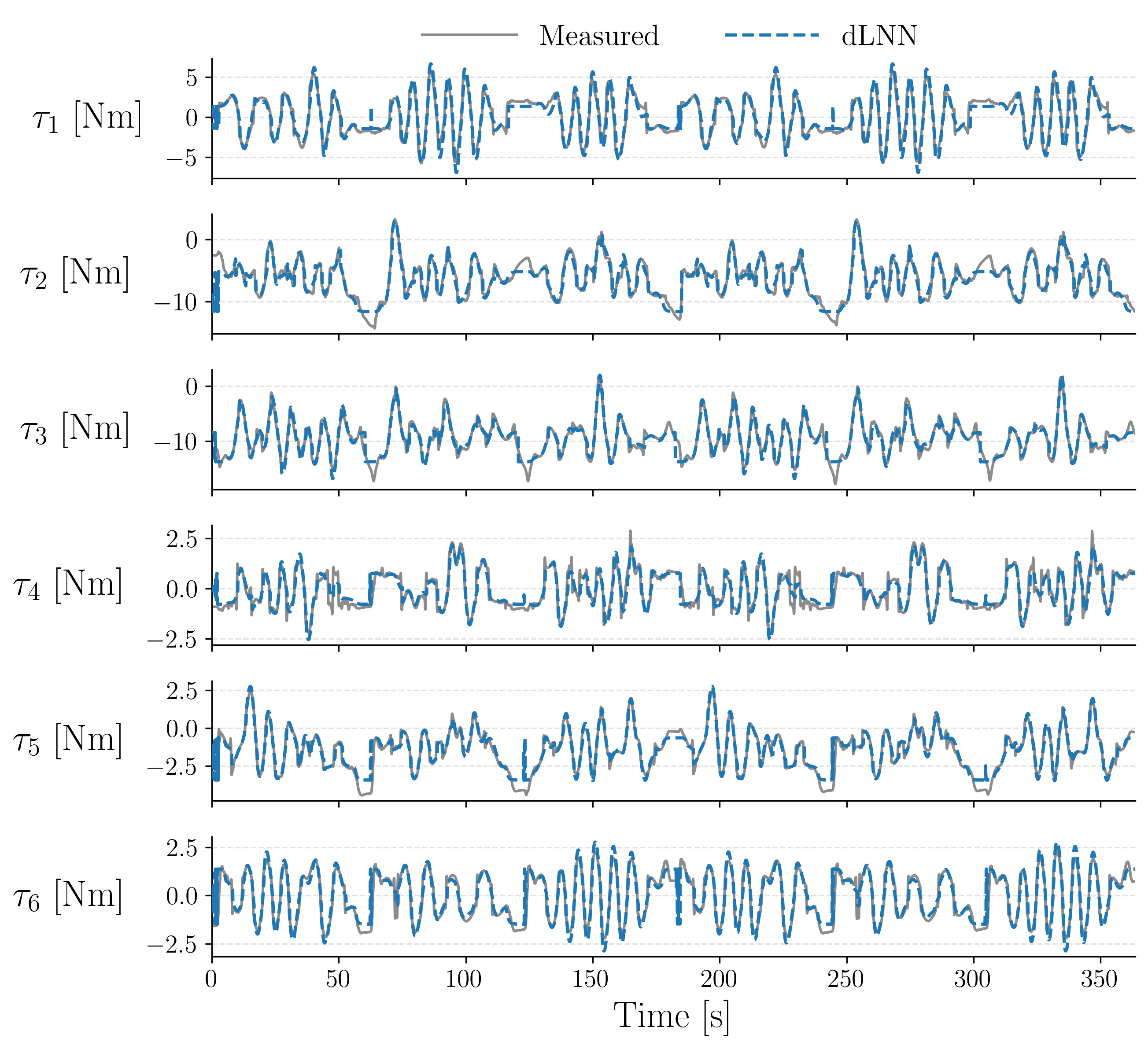}
    \caption{Measured and dLNN-predicted joint torques $\tau_1$–$\tau_6$ across the six sequential test trajectories of the industrial robot benchmark.}
    \label{fig:robot_prediction}
\end{figure}

Table~\ref{tab:robot_r2} summarises the inverse-dynamics performance on the robot benchmark. The dLNN achieves the highest torque-prediction performance, with an average $R^2$ of $88.8\%$ across the six robot joints, closely followed by the black-box MLP at $88.1\%$. The remaining models achieve lower prediction accuracy, with the gLNN, linear baseline and LNN reaching average $R^2$ values of $74.4\%$, $62.0\%$ and $57.5\%$, respectively.

Figure~\ref{fig:robot_prediction} shows the dLNN torque predictions across the test trajectories. Prediction errors are most pronounced during the low-velocity periods between the six sequential test trajectories, consistent with the difficulty of these operating regions reported for the original benchmark~\cite{weigand_dataset_2022}. Closer agreement between the measured and predicted torques is observed during the dynamic portions of the trajectories.

\section{Symbolic distillation}
\label{sec:distillation}
The property that the dLNN architecture can effectively decompose system dynamics into physical quantities introduces the possibility of distilling interpretable symbolic models from the learned functions $\mathbf{M}_\theta(\mathbf{q})$, $V_\theta(\mathbf{q})$ and $\mathbf{Q}^\mathrm{diss}_\theta(\mathbf{q}, \dot{\mathbf{q}})$. Accurate closed-form expressions, particularly for the conservative dynamics, could then replace the corresponding black-box neural networks while retaining the predictive model in an explicitly interpretable form.

Finding symbolic forms for $\mathbf{M}_\theta$ and $V_\theta$ would also remove the necessity to compute the derivatives $\nabla_\mathbf{q} V_\theta$ and 
$
\mathbf c_\theta(\mathbf q,\dot{\mathbf q})
=
\dot{\mathbf M}_\theta(\mathbf q)\dot{\mathbf q}
-
\frac12
\nabla_\mathbf{q}
\left(
\dot{\mathbf q}^{\mathsf T}
\mathbf M_\theta(\mathbf q)
\dot{\mathbf q}
\right)
$ using autodifferentiation at inference time since closed-form expressions for the terms $c_\theta(\mathbf q,\dot{\mathbf q})$ and $\nabla_\mathbf{q} V_\theta$ would also be readily available. This has potential to decrease the inference time and memory overhead of the model.
\subsection{Distillation procedure}
To distil symbolic expressions from the neural-network outputs $\mathbf{M}_\theta(\mathbf{q})$, $V_\theta(\mathbf{q})$, and $\mathbf{Q}^\mathrm{diss}_\theta(\mathbf{q},\dot{\mathbf{q}})$, a set of input states is first passed through the trained networks to generate corresponding input-output pairs, for example $(\mathbf{q}_i^*,\mathbf{M}_\theta(\mathbf{q}_i^*))$. The sampled inputs should ideally be representative of the training distribution and may simply be taken from the original training dataset.

To fit a symbolic model to this data, sequentially thresholded least squares regression (STLSQ) is applied, exactly as in the SINDy algorithm~\cite{brunton_discovering_2016}, with a library of candidate functions appropriately chosen for each of $\mathbf{M}_\theta$, $V_\theta$ and $\mathbf{Q}^\mathrm{diss}_\theta$ (Table \ref{tab:distillation_libraries}).
\subsection{Robot model symbolic distillation}
For the inverse robotic dynamics, this technique was applied in an attempt to reduce the inference time of a trained dLNN model. It is known that both the true $\mathbf{M}(\mathbf{q})$ and $V(\mathbf{q})$ will contain primarily trigonometric terms; however, in the interest of creating an efficient lower-order model, polynomial functions are chosen as the library functions for both $\mathbf{M}_\theta(\mathbf{q})$ and $V_\theta(\mathbf{q})$. Polynomial libraries up to a degree of two for $\mathbf{M}_\theta(\mathbf{q})$ and three for $V_\theta(\mathbf{q})$ are used (Table~\ref{tab:distillation_libraries}).

At this point, it is also possible to enforce known dependencies into the symbolic libraries. For example, it is known that $V(\mathbf{q})$ will not depend on the base yaw rotation $q_1$ so terms involving this coordinate can be removed from the library of candidate functions. For a serial robot arm, it is also known that $M_{ij}$ is only a function of $(q_{\operatorname{min}(i,j)+1},\dots,q_n)$, where joints are numbered 1-6 from the base upwards~\cite{khalil_chapter_2002}. These known dependencies can therefore be used to remove physically inadmissible terms from the corresponding symbolic libraries before regression.

Symbolic models are also fitted to the outputs of $\mathbf{Q}^\mathrm{diss}_\theta(\mathbf{q},\dot{\mathbf{q}})$. This time the library for $Q_i$ is simply $[\operatorname{sgn}(\dot{q}_i),\dot{q}_i]$, representing a simple Coulomb-viscous friction model~\cite{olsson_friction_1998}. Adding more complex velocity terms, such as $|\dot{q}_i|\dot{q}_i$ and $\dot{q}_i^3$, and allowing dependence on the configuration $\mathbf{q}$ did not improve performance over the simple two-term library.

The exact extracted symbolic expressions are found in Appendix~\ref{sec:Distilled-Robot}.

\begin{table}[h]
\centering
\small
\setlength{\tabcolsep}{6pt}
\renewcommand{\arraystretch}{1.3}
\caption{Symbolic libraries used for distillation of the six-joint
robot model. Polynomial libraries include all distinct terms,
including powers and cross terms.}
\label{tab:distillation_libraries}
\begin{tabular}{|l|cc|}
\hline
Component & Candidate functions & Coordinate indices \\
\hline
Mass-matrix entry $M_{ij}$
& $1,\;q_k,\;q_kq_\ell$
& $k,\ell\in\{\min(i,j)+1,\ldots,6\}$ \\
Potential energy $V$
& $1,\;q_k,\;q_kq_\ell,\;q_kq_\ell q_m$
& $k,\ell,m\in\{2,\ldots,6\}$ \\
Dissipative force $Q_i$
& $\operatorname{sgn}(\dot q_i),\;\dot q_i$
& -- \\
\hline
\end{tabular}
\end{table}

\subsection{Accuracy and computational cost}
As a result of the symbolic distillation, two new models were created: one is a fully symbolic model, and the other is a hybrid model that has symbolic $\mathbf{M}$ and $V$ while retaining the neural network $\mathbf{Q}^\mathrm{diss}_\theta$. These two models are then evaluated for torque prediction on the test set as the original models were. Additional tests were made for the CPU evaluation time on batch sizes of $B=1$ and $B=64$. The results are detailed in Table~\ref{tab:robot_symbolic_performance}.

The hybrid model was found to be very accurate, with only a 0.4-percentage-point reduction in $R^2$ torque prediction for a 3.2$\times$ speedup in CPU evaluation time compared to the dLNN model. The fully symbolic model saw a more drastic 7.3-percentage-point reduction in $R^2$ torque prediction but evaluated 4.89$\times$ faster than the standard dLNN. The computational advantage was more pronounced for the larger batch size, $B=64$, where the hybrid model and symbolic model were 6.4$\times$ and 11.3$\times$ faster than the dLNN, respectively.

\begin{table}[htbp]
\centering
\small
\setlength{\tabcolsep}{6pt}
\renewcommand{\arraystretch}{1.3}
\caption{Inverse dynamics prediction accuracy and CPU evaluation time for the dLNN, distilled models and black-box MLP on the industrial robot benchmark. Evaluation times are reported for batch sizes $B=1$ and $B=64$.}
\label{tab:robot_symbolic_performance}
\begin{tabular}{|l|ccc|}
\hline
Model & $R^2$ (\%) & CPU time, $B=1$ (ms) & CPU time, $B=64$ (ms) \\
\hline
dLNN & 88.8 & 0.494 & 3.162 \\
Hybrid ($M,V$ symbolic; $Q$ neural) & 88.4 & 0.155 & 0.491 \\
Fully Symbolic & 81.5 & 0.101 & 0.280 \\
MLP & 88.1 & 0.191 & 0.806 \\
\hline
\end{tabular}
\end{table}
Note that the reported evaluation times were measured using compiled PyTorch models in a Google Colab environment. Timings will depend significantly on hardware, implementation, network size, and symbolic model complexity, so these results should be viewed as illustrative rather than universally representative.

\section{Discussion}

\subsection{Physical consistency and predictive performance}
The advantage of the dLNN architecture, in terms of performance, comes from its improved ability to accurately generalise to unseen data. The inductive bias of enforcing a passive, mechanical structure greatly reduces the space of learnable models, improving the ability of the model to generalise to previously unseen parts of its input space. By enforcing physically realistic behaviour structurally rather than through the loss function, passivity is guaranteed even for dynamics outside the training distribution. This structure allows the dLNN model to generalise better than more flexible black-box models from limited amounts of data.

The degradation of the LNN model performance under even small amounts of friction also highlights an important point. For the simulated double pendulum, even under 5\% of the nominal friction, the LNN becomes the joint worst model, alongside the black-box nODE. At this level of friction, the ratio of the average frictional torque to the gravitational torque is estimated at 1:50,000 ($\left\|\mathbf{Q}^{\mathrm{diss}}\right\|_2
:
\left\|\nabla_{\mathbf{q}}V\right\|_2$) averaged across the training dataset. For this albeit idealised simulated system, this is enough to remove all performance advantage gained by the prior of exact energy conservation over the completely black-box nODE model. This highlights the importance of modelling dissipation alongside conservative mechanics in real-world systems, where even small energy losses can affect the dynamics. Similar problems are observed for the real-world double pendulum where the LNN performs notably worse than gLNN and dLNN, and for the robotic task the LNN performs even worse than the baseline linear model.

\subsection{Passivity and physical consistency}
A notable observation from the nODE predictions for the real double pendulum task was the possibility of the model to predict non-physical unstable dynamics. This is of particular concern for predictions in safety-critical applications, where strong average performance alone is insufficient and the behaviour of a model under worst-case conditions can determine whether it can be deployed safely~\cite{robey_probabilistically_2022}. By the passive construction of the dLNN network, unnatural injections of energy are prohibited in the learned dynamics. Even outside the training range, unforced trajectories predicted by the dLNN are guaranteed to settle towards stationary equilibria. For sparse data regions, the dLNN is observed to over-predict the dissipation present. This seems to be a much less extreme failure mechanism in the case of uncertainty than the instability that is sometimes observed for the black-box nODE model.

The passivity property is of particular importance if the learned model is to be used within a simulation environment for learning a reinforcement learning policy. If a simulation environment contains unphysical behaviour, such as artificial energy injection, a reinforcement-learning policy may learn to exploit these model inaccuracies rather than behaviours that transfer to the physical system. This is known as model exploitation in the reinforcement-learning literature~\cite{ibarz_how_2021}.

\subsection{Separation and interpretability}

A consequence of the dLNN formulation is that the learned non-conservative branch cannot freely absorb dynamics that should be represented by the Lagrangian. This helps retain the physical inductive bias of the model.

By contrast, an unconstrained generalised-force model can in principle bypass the Lagrangian structure entirely. For example, a gLNN may learn $\mathcal{L}_\theta=\frac{1}{2}\dot{\mathbf{q}}^\mathsf{T}\dot{\mathbf{q}}\leftrightarrow \mathbf{M}_\theta=\mathbf{I},\,V_\theta=0$, such that the unforced forward dynamics reduce to $\ddot{\mathbf{q}}=\mathbf{Q}_\theta(\mathbf{q},\dot{\mathbf{q}})$. The generalised-force network can then represent the complete acceleration field directly, making the model equivalent to a black-box second-order nODE in terms of the space of dynamics it can learn. 

Constraining $\mathbf{Q}^{\mathrm{diss}}_\theta(\mathbf{q},\mathbf{0})=\mathbf{0}$ prevents this degeneracy such that the dissipation network of the dLNN cannot absorb conservative dynamics. Therefore the dissipative extension retains the interpretability of the constituent kinetic energy, potential energy and dissipative terms. This enables the model to be used, as the original Deep Lagrangian Network~\cite{lutter_deep_2019} is, in energy-based control loops that directly use the readily accessible kinetic and potential energy values in a feedback control system~\cite{lutter_deep_2019-1}.

This separability also gives the dLNN a naturally modular structure, allowing $\mathbf{M}_\theta$, $V_\theta$, and $\mathbf{Q}^\mathrm{diss}_\theta$ to be inspected, modified, or retrained independently where appropriate. For example, the dissipative behaviour of a mechanical system may change over time due to wear~\cite{khonsari_running-nature_2021} while its conservative mechanics remain approximately fixed. In this case, $\mathbf{Q}^\mathrm{diss}_\theta$ could be updated without relearning the complete dynamics. 

It is also possible to then include auxiliary inputs that may only affect the dissipation network $\mathbf{Q}^\mathrm{diss}_\theta$. For example, the dissipation network could take temperature and humidity as auxiliary inputs~\cite{olsson_friction_1998}. This provides an avenue to include system dependence on these factors without altering the underlying conservative dynamics. It would even be possible to include a delay embedding into $\mathbf{Q}^\mathrm{diss}_\theta$ to allow the model to approximate dissipative behaviour with memory or hysteresis while retaining passivity, although achieving this without allowing the dissipation network to absorb conservative mechanics remains an open problem.

\subsection{Symbolic distillation}
The separability property of the dLNN enables us to attempt to distil closed-form expressions for $\mathbf{M}_\theta$, $V_\theta$ and $\mathbf{Q}^\mathrm{diss}_\theta$. The accuracy of the symbolic and hybrid (symbolic $\mathbf{M}$ and  $V$ and neural network $\mathbf{Q}^{\mathrm{diss}}_\theta$) models indicates that distilling a trained dLNN model into a simple closed-form expression is an effective way to transform a black-box model into an interpretable, computationally efficient symbolic model without excessively sacrificing performance. It is clear, however, that the symbolic distillation seems more appropriate for $\mathbf{M}$ and $V$ than for $\mathbf{Q}^\mathrm{diss}$. This seems to align with the general expectation that $\mathbf{M}$ and $V$ will be comparatively smooth and continuous, compared to the discontinuity and general complexity of real friction behaviour \cite{olsson_friction_1998}. The hybrid model, with symbolic $\mathbf{M}$ and $V$ but neural network $\mathbf{Q}^\mathrm{diss}_\theta$, seems to be a good middle ground in terms of overall performance, interpretability and computational performance. 

This hybrid formulation also suggests a natural grey-box extension of the dLNN. Where reliable first-principles expressions for $\mathbf{M}$ and $V$ are already available, these may be prescribed directly while only the uncertain dissipative term $\mathbf{Q}^{\mathrm{diss}}_\theta$ is learned from data, with the same passive formulation used for the dLNN. This may be particularly useful for mechanical systems in which the conservative dynamics are well understood but frictional and other dissipative effects are difficult to model accurately. In other words, this amounts to learning a passive discrepancy relative to a known conservative model.

\subsection{Future work}

Several directions remain for extending the proposed framework. Two key extensions might be:

\begin{itemize}
    \item \textbf{Richer dissipative dynamics}: The dLNN architecture could be extended to model more complex forms of dissipation. Interesting extensions could work towards considerations of temperature dependence, hysteretic friction, or other memory-dependent forces. A further extension should also focus on a more rigorous consideration of stick-slip friction.
    
    \item \textbf{Structure-preserving distillation}: Currently, the symbolic distillation procedure does not by construction enforce the positive definiteness of $\mathbf{M}$ and lower-boundedness of $V$. Both of these are required for the passivity/stability guarantee of the learned dynamics. An interesting extension could be to explore a method to distil symbolic expressions for $\mathbf{M}$ and $V$ such that these properties are guaranteed in the symbolic dynamics.
\end{itemize}

\section{Conclusion}

This work introduced the Dissipative Lagrangian Neural Network (dLNN), extending Lagrangian Neural Networks to dissipative mechanical systems while retaining a physically structured and interpretable representation of the dynamics. By explicitly separating conservative and dissipative components and constraining the latter to be passive, the dLNN combines physical consistency with the flexibility required to model real passive mechanical systems. Across simulated and experimental double-pendulum benchmarks, the dLNN provided consistently strong trajectory and energy predictions, particularly as dissipation increased and the conservative LNN assumption became inaccurate. On the industrial robot benchmark, the dLNN achieved the highest inverse-dynamics prediction accuracy while remaining competitive with a black-box neural model. Finally, the separation of the learned dynamics enabled symbolic distillation of the conservative and dissipative components into compact analytical expressions. These results demonstrate that explicitly incorporating dissipation into Lagrangian Neural Networks provides a practical route towards learned mechanical models that combine predictive accuracy, physical guarantees and interpretability.

\ack{The authors would like to thank John Aldridge and the technical staff at the University of Cambridge Department of Engineering (CUED) Mechanics Workshop for the design and custom fabrication of the double pendulum experimental apparatus, along with the undergraduate research project contributors for developing the data acquisition frameworks and the repository used in this study.}

\bibliographystyle{unsrt}
\bibliography{dlnn_BIB8}

\appendix

\section{Double pendulum equations of motion}
\label{sec:DP-EOM}

The double pendulum comprises two uniform rods and a point mass
$m_p$ at the intermediate pivot. The absolute rod angles $\alpha$
and $\beta$ are measured from the downward vertical. The Lagrangian is
\begin{equation}
    \mathcal{L} = \frac{1}{2}A\dot{\alpha}^{2}
      + \frac{1}{2}B\dot{\beta}^{2}
      + C\dot{\alpha}\dot{\beta}\cos(\alpha-\beta)
      + D\cos\alpha + E\cos\beta
\end{equation}
where $A=(m_p+m_a/3+m_b)a^2$, $B=m_b b^2/3$,
$C=m_b ab/2$, $D=ga(m_p+m_a/2+m_b)$ and $E=gm_b b/2$.
The physical parameters are listed in Table~\ref{tab:pendulum_parameters}. The total energy is 
\begin{equation}
\label{eq:ENERGY}
        \mathrm{Energy} = \frac{1}{2}A\dot{\alpha}^{2}
      + \frac{1}{2}B\dot{\beta}^{2}
      + C\dot{\alpha}\dot{\beta}\cos(\alpha-\beta)
      - D\cos\alpha - E\cos\beta
\end{equation}

\begin{table}[htbp]
\centering
\small
\setlength{\tabcolsep}{6pt}
\renewcommand{\arraystretch}{1.3}
\caption{Physical and friction parameters of the double-pendulum model, selected to reproduce the experimental apparatus.}
\label{tab:pendulum_parameters}
\begin{tabular}{|l|ccl|}
\hline
Quantity & Symbol & Value & Unit \\
\hline
Upper rod length & $a$ & $0.185$ & $\mathrm{m}$ \\
Lower rod length & $b$ & $0.172$ & $\mathrm{m}$ \\
Upper rod mass & $m_a$ & $0.044$ & $\mathrm{kg}$ \\
Lower rod mass & $m_b$ & $0.019$ & $\mathrm{kg}$ \\
Intermediate point mass & $m_p$ & $0.022$ & $\mathrm{kg}$ \\
Gravitational acceleration & $g$ & $9.81$ & $\mathrm{m\,s^{-2}}$ \\
First joint friction torque scale & $T_1$
    & $7.50\times10^{-4}$ & $\mathrm{N\,m}$ \\
Second joint friction torque scale & $T_2$
    & $2.50\times10^{-4}$ & $\mathrm{N\,m}$ \\
First joint viscous coefficient & $b_1$
    & $2.00\times10^{-5}$ & $\mathrm{N\,m\,s\,rad^{-1}}$ \\
Second joint viscous coefficient & $b_2$
    & $1.00\times10^{-5}$ & $\mathrm{N\,m\,s\,rad^{-1}}$ \\
Friction smoothing parameter & $\lambda$
    & $1.00$ & $\mathrm{s\,rad^{-1}}$ \\
\hline
\end{tabular}
\end{table}
To model joint dissipation, a friction force is modelled that combines viscous friction with a smooth Coulomb-type contribution, where the usual sign function $\operatorname{sgn}{v}$ is approximated with $\tanh{\lambda v}$:
\begin{align}
    \tau_1 &= -b_1\dot{\alpha}
           -T_1\tanh(\lambda\dot{\alpha}) \\
    \tau_2 &= -b_2(\dot{\beta}-\dot{\alpha})
           -T_2\tanh\!\left(\lambda(\dot{\beta}-\dot{\alpha})\right)
\end{align}
The second joint torque acts on the relative rotation
$\beta-\alpha$, giving the generalised forces
\begin{equation}
    Q_\alpha=\tau_1-\tau_2
    \qquad
    Q_\beta=\tau_2
\end{equation}
The Euler-Lagrange equations are solved and rearranged to give
the angular accelerations explicitly:

\begin{equation}
    \ddot{\alpha}
    =
    \frac{
        B\left[-D\sin\alpha-C\dot{\beta}^{2}\sin(\alpha-\beta)+Q_\alpha\right]
        -C\cos(\alpha-\beta)
        \left[-E\sin\beta+C\dot{\alpha}^{2}\sin(\alpha-\beta)+Q_\beta\right]
    }{
        AB-C^{2}\cos^{2}(\alpha-\beta)
    }
    \label{eq:double_pendulum_alpha}
\end{equation}

\begin{equation}
    \ddot{\beta}
    =
    \frac{
        A\left[-E\sin\beta+C\dot{\alpha}^{2}\sin(\alpha-\beta)+Q_\beta\right]
        -C\cos(\alpha-\beta)
        \left[-D\sin\alpha-C\dot{\beta}^{2}\sin(\alpha-\beta)+Q_\alpha\right]
    }{
        AB-C^{2}\cos^{2}(\alpha-\beta)
    }
    \label{eq:double_pendulum_beta}
\end{equation}
For small amplitudes, the natural frequencies of the system are
\begin{equation}
    \omega_{1,2}
    =
    \sqrt{
        \frac{
            AE+BD
            \mp \sqrt{(AE-BD)^2+4C^2DE}
        }{
            2(AB-C^2)
        }
    }.
\end{equation}
Using the parameters in Table~\ref{tab:pendulum_parameters} gives
\begin{equation}
    \omega_1 \approx 6.794\,\mathrm{rad\,s^{-1}},
    \qquad
    \omega_2 \approx 12.226\,\mathrm{rad\,s^{-1}},
\end{equation}
or, equivalently,
\begin{equation}
    f_1=\frac{\omega_1}{2\pi}\approx 1.081\,\mathrm{Hz},
    \qquad
    f_2=\frac{\omega_2}{2\pi}\approx 1.946\,\mathrm{Hz}.
\end{equation}
These are notably orders of magnitude lower than the measurement rate of 90 Hz.

\section{Distilled robot expressions}
\label{sec:Distilled-Robot}

Approximate symbolic expressions for $\mathbf{M}_\theta$, $V_\theta$ and $\mathbf{Q}^{\mathrm{diss}}_\theta$ for the industrial robotic arm are fitted to input-output data pairs using the SINDy algorithm~\cite{brunton_discovering_2016} as described in Section~\ref{sec:distillation}. The following expressions are found:

\begin{align*}
M_{11} &= 14.4 + 2.77 q_2 - 1.91 q_2 q_3 - 0.710 q_3^2
          + 0.546 q_3 - 0.198 q_5^2 + 0.193 q_5 \\[4pt]
M_{12} &= -0.982 q_2 - 0.896 q_3 - 0.851 q_3^2 - 0.396
          - 0.234 q_2 q_3 - 0.138 q_5 - 0.108 q_3 q_5 \\[4pt]
M_{13} &= -0.874 q_2 q_3 + 0.428 q_2^2 - 0.293 q_3
          + 0.221 q_2 - 0.166 q_3^2 \\
       &\quad - 0.152 - 0.0707 q_5^2 - 0.0618 q_5
          - 0.0564 q_3 q_5 \\[4pt]
M_{14} &= 0.244 q_2 q_3 + 0.151 q_3 - 0.142 q_2^2
          + 0.116 q_3^2 + 0.0773 q_2 \\
       &\quad + 0.0580 + 0.0505 q_3 q_5 + 0.0489 q_5 \\[4pt]
M_{15} &= -0.305 q_2^2 - 0.0856 q_3^2 + 0.0838
          - 0.0352 q_2 q_3 + 0.0250 q_2 \\
       &\quad + 0.0226 q_3 - 0.0176 q_3 q_5 + 0.0110 q_5 \\[4pt]
M_{16} &= -0.287 q_2^2 - 0.240 q_2 q_3 - 0.239 q_3^2
          - 0.122 q_3 - 0.0860 q_2 \\
       &\quad + 0.0634 - 0.0461 q_3 q_5 - 0.0232 q_5
          - 0.0158 q_5^2 \\[4pt]
M_{22} &= 23.9 + 1.42 q_3^2 + 1.09 q_3 + 0.0741 q_4^2 \\[4pt]
M_{23} &= 1.06 q_3^2 + 0.736 + 0.529 q_3 q_5 + 0.327 q_5 \\[4pt]
M_{24} &= 0.807 q_3 + 0.634 q_3^2 + 0.161
          + 0.0719 q_5 + 0.0705 q_3 q_5 \\[4pt]
M_{25} &= 0.175 - 0.143 q_3 - 0.0740 q_5
          - 0.0457 q_3 q_5 - 0.0370 q_3^2 \\[4pt]
M_{26} &= 0.144 q_3^2 - 0.0925 + 0.0470 q_3
          + 0.0401 q_5 + 0.0318 q_3 q_5 - 0.0228 q_5^2 \\[4pt]
M_{33} &= 12.1 \\[4pt]
M_{34} &= 0.198 + 0.00954 q_5 \\[4pt]
M_{35} &= 0.148 - 0.0425 q_5^2 + 0.0323 q_5 - 0.00771 q_4^2 \\[4pt]
M_{36} &= -0.149 + 0.0364 q_5^2 + 0.0238 q_5 + 0.00369 q_4^2 \\[4pt]
M_{44} &= 1.92 - 0.0137 q_5 \\[4pt]
M_{45} &= -0.0360 - 0.00843 q_5^2 \\[4pt]
M_{46} &= -0.0151 - 0.00472 q_5 \\[4pt]
M_{55} &= 3.18 \\[4pt]
M_{56} &= 0.0336 \\[4pt]
M_{66} &= 2.50 \\[8pt]
V &= -10.6 q_3 - 8.08 q_2 + 4.88 q_2^2 q_3
     + 4.88 q_2 q_3^2 - 3.73 + 3.55 q_2^2 \\
  &\quad + 1.90 q_2^3 - 1.85 q_5 + 1.68 q_3^3
     + 1.45 q_2 q_3 q_5 + 0.937 q_3 q_5^2 \\
  &\quad + 0.809 q_2 q_5^2 + 0.784 q_4^2 q_5
     + 0.632 q_3^2 q_5 + 0.236 q_3^2 + 0.236 q_5^3 \\
  &\quad - 0.109 q_3 q_5 + 0.101 q_3 q_4
     + 0.0859 q_3^2 q_4 + 0.0841 q_3 q_4^2 - 0.0391 q_4 \\[8pt]
Q_1 &= -3.99 \dot{q}_1 - 0.865 \operatorname{sgn}(\dot{q}_1) \\[4pt]
Q_2 &= -1.14 \operatorname{sgn}(\dot{q}_2) \\[4pt]
Q_3 &= -1.07 \operatorname{sgn}(\dot{q}_3) \\[4pt]
Q_4 &= -0.877 \dot{q}_4 - 0.342 \operatorname{sgn}(\dot{q}_4) \\[4pt]
Q_5 &= -1.77 \dot{q}_5 - 0.347 \operatorname{sgn}(\dot{q}_5) \\[4pt]
Q_6 &= -1.80 \dot{q}_6 - 0.362 \operatorname{sgn}(\dot{q}_6)
\end{align*}

\end{document}